\newtheorem{theorem}{Theorem}
\newtheorem{corollary}{Corollary}
\newtheorem{definition}{Definition}
\newtheorem{proposition}{Proposition}
\newtheorem{lemma}{Lemma}
\newtheorem{fact}{Fact}
\newcommand{\balpha}{\boldsymbol{\alpha}}
\newcommand{\bdelta}{\boldsymbol{\Delta}}
\newcommand{\bg}{\boldsymbol{g}}
\newcommand{\bh}{\boldsymbol{h}}
\newcommand{\bpi}{\boldsymbol{\pi}}
\newcommand{\bG}{\boldsymbol{G}}
\newcommand{\bY}{\boldsymbol{Y}}
\newcommand{\cK}{\mathcal{K}}
\newcommand{\cY}{\mathcal{Y}}
\newcommand{\cP}{\mathcal{P}}
\newcommand{\cE}{\mathcal{E}}
\newcommand{\cF}{\mathcal{F}}
\newcommand{\cN}{\mathcal{N}}
\newcommand{\bc}{\bar{c}}
\newcommand{\blue}{}
\begin{document}
%
\title{Information Directed Sampling for Stochastic Bandits with Graph Feedback}
\author{Fang Liu \\
The Ohio State University\\
Columbus, Ohio 43210\\
liu.3977@osu.edu\\
\And Swapna Buccapatnam\\
AT\&T Labs Research\\
Middletown, NJ 07748\\
sb646f@att.com\\
\And Ness Shroff\\
The Ohio State University\\
Columbus, Ohio 43210\\
shroff.11@osu.edu\\
}
\maketitle
\begin{abstract}
We consider stochastic multi-armed bandit problems with graph feedback, where the decision maker is allowed to observe the neighboring actions of the chosen action. We allow the graph structure to vary with time and consider both deterministic and Erd{\H{o}}s-R{\'e}nyi random graph models. For such a graph feedback model, we first present a novel analysis of Thompson sampling that leads to tighter performance bound than existing work. Next, we propose new Information Directed Sampling based policies that are graph-aware in their decision making. Under the deterministic graph case, we establish a Bayesian regret bound for the proposed policies that scales with the clique cover number of the graph instead of the number of actions. Under the random graph case, we provide a Bayesian regret bound for the proposed policies that scales with the ratio of the number of actions over the expected number of observations per iteration. To the best of our knowledge, this is the first analytical result for stochastic bandits with random graph feedback. Finally, using numerical evaluations, we demonstrate that our proposed IDS policies outperform existing approaches, including adaptions of upper confidence bound, $\epsilon$-greedy and Exp3 algorithms.
  \end{abstract}

\section{Introduction}\label{sec:intro}
Multi-Armed Bandits (MAB) have been used as quintessential models for sequential decision making. In the classical MAB setting, at each time, a decision maker must choose an action from a set of $K$ actions with unknown probability distributions. Choosing an action $i$ at time $t$ reveals a random reward $X_{i}(t)$ drawn from the probability distribution of action $i.$  The goal is to find policies that minimize the expected loss due to uncertainty about actions' distributions over a given time horizon.      

In this work, we consider an important MAB setting, called the graph-structured feedback or the side-observation model~\cite{mannor,bhagat,sigmetrics2014,tossou2017thompson}, where choosing an action $i$ not only generates a reward from action $i$, but also reveals observations for a subset of the remaining actions. 

Such a scenario occurs in social networks, sensors networks, and advertising. For example, a decision maker must choose one user at each time in an online social network (e.g., Facebook) to offer a promotion~{\blue \cite{bhagat,carpentier2016revealing}}. Each time the decision maker offers a promotion to a user, he also has an opportunity to survey the user's neighbors in the network regarding their potential interest in a similar offer.\footnote{This is possible when the online network has an additional survey feature that generates ``side observations''. For example, when user $i$ is offered a promotion, her neighbors may be queried as follows: ``User $i$ was recently offered a promotion. Would you also be interested in the offer?".} Users are found to be more responsive to such surveys using social network information compared to generic surveys~\cite{ugander2011Facebook}, and this effect can be leveraged to construct side observations.

Consider another example, when the actions are advertisements~\cite{mannor} - the decision maker constructs a graph of different vacation places (Hawaii, Caribbean, Paris, etc.), where links capture similarities between different places. When a customer shows interest in one of the places, he is also asked to provide his opinion about the neighboring places in the graph.  

Side-observation models are also applicable to sensor networks that monitor events, where an agent must choose one (or a few) sensor to sample at each time. The reward obtained from choosing a sensor is related to its accuracy of monitoring the event. Neighboring sensors can communicate their observations to each other and this data aggregation has the desired affect of obtaining side-observations from neighboring sensors on determining which sensor(s) to select.

In~\cite{bhagat,sigmetrics2014,buccapatnam2017reward,tossou2017thompson}, the authors propose new policies for the side observation model in the stochastic bandit setting that exploit the graph structure to accelerate learning. In~\cite{bhagat,sigmetrics2014}, the authors propose extensions to upper confidence bound based policies, originally proposed for the classical MAB setting in~\cite{auer2002finite}. Policies proposed in~\cite{sigmetrics2014,buccapatnam2017reward}, namely $\epsilon_t$-greedy-LP and UCB-LP, are shown to be asymptotically optimal, both in terms of the network structure and time. In~\cite{tossou2017thompson}, the authors analyze the Bayesian regret performance of another well known classical bandit policy called Thompson Sampling (TS)~\cite{thompson} for the side-observation model. We make the following important contributions to this existing literature on graphical bandits:
\begin{itemize}
\item For the graph structured MAB feedback model we allow the side observation graph to vary with time. We focus on developing a problem-independent Bayesian regret bound. 
We provide a tighter bound for Thompson sampling, given in terms of the clique cover number of the side-observation graph than the bound presented in~\cite{tossou2017thompson}.
\item We also propose three algorithms, all of which are based on the approach of Information Directed Sampling (IDS) developed in~\cite{russo2014learning}. We show that these algorithms enjoy the same theoretical bounds as Thompson Sampling in terms of the clique cover number of the graph. However, using numerical evaluations, we show in Section~\ref{sec:simul}, that IDS based policies outperform existing policies in~\cite{sigmetrics2014}, that are provably asymptotically optimal both in terms of network structure and time. Hence, this raises the open question of how to determine better Bayesian regret bounds for our IDS based policies in terms of the network structure. 
\item  In contrast with existing works, we also consider the novel setting of a time variant random graph feedback model, where side observations from neighboring actions are obtained with a probability $r_t$ in time $t$.\footnote{This models the scenario of sensor networks where errors due to channel conditions can cause side observations to be randomly erased.} We provide Bayesian regret bounds for Thompson Sampling and our proposed IDS based policies for this probabilistic model as well. We believe that our work provides the first result for stochastic bandits with random graph feedback.
\end{itemize}

\section{Related Work}
Our work is related to~\cite{russo2014learning}, the authors of which propose a novel approach called IDS. IDS samples each action in a manner that minimizes the ratio between the squared expected single-period regret and the mutual information between optimal action and the next observation. It has been shown in~\cite{russo2014learning} using numerical simulations that IDS outperforms TS and Upper Confidence Bound (UCB)~\cite{auer2002finite} in the classical bandit setting. This motivated us to investigate extensions of IDS policy for the stochastic bandits with graph feedback and compare with adaptions of TS and UCB policies, which have been studied by~\cite{bhagat,sigmetrics2014,tossou2017thompson}. 


The Thompson Sampling algorithm as analyzed in~\cite{tossou2017thompson} does not explicitly use the graph structure in each step for its operation (similar to UCB-N algorithm proposed in~\cite{bhagat}). This is attractive when such graphical information is difficult to obtain, a case also studied in~\cite{Cohen2016}. However, in many cases such as the problem of promotions in online social networks and sensor networks, the graph structure is revealed or can be learned a priori. When such knowledge is available,~\cite{sigmetrics2014} show that using graphical information to make choices in each time helps in obtaining the optimal regret  both in terms of network structure and time asymptotically. This work motivated us to investigate extensions of IDS policies that exploit the knowledge of network structure in our work. Using numerical evaluations, we find that IDS policies outperform asymptotically optimal policies presented in~\cite{sigmetrics2014}.

{\blue Non-stochastic bandits with graph feedback have been studied by a line of work~\cite{mannor,alon2013bandits,kocak2014efficient,alon2014nonstochastic,alon2015online}. Other related partial feedback models include label efficient bandit in~\cite{audibert2010regret} and prediction with limited advice in~\cite{seldin2014prediction}, where side observations are limited by a budget. A summary of the bandits on graphs can be found in \cite{valko2016bandits}.}

\section{Problem Formulation}
\subsection{Stochastic Bandit Model}
We consider a Bayesian formulation of the stochastic $K$-armed bandit problem in which uncertainties are modeled as random variables. At each time $t\in\mathbb{N}$, a decision maker chooses an action $A_t$ from a finite action set $\cK=\{1,\ldots,K\}$ and receives the corresponding random reward $Y_{t,A_t}$. Without loss of generality, we assume the space of possible rewards $\cY=[0,1]$. Note that the results in this work can be extended to the case where reward distributions are sub-Gaussian. There is a random variable $Y_{t,a}\in\cY$ associated with each action $a\in\cK$ and $t\in\mathbb{N}$. We assume that $\{Y_{t,a},\forall a\in\cK\}$ are independent for each time $t$. Let $\bY_t\triangleq(Y_{t,a})_{a\in\cK}$ be the vector of random variables at time $t\in\mathbb{N}$. The true reward distribution $p^*$ is a distribution over $\cY^K$, which is randomly drawn from the family of distributions $\cP$ and unknown to the decision maker. Conditioned on $p^*$, $(\bY_t)_{t\in\mathbb{N}}$ is an independent and identically distributed sequence with each element $\bY_t$ sampled from the distribution $p^*$. 

Let $A^*\in \arg\max_{a\in\cK}\mathbb{E}[Y_{t,a}|p^*]$ be the true optimal action conditioned on $p^*$. Then the $T$ period regret of the decision maker is the expected difference between the total rewards obtained by an oracle that always chooses the optimal action and the accumulated rewards up to time horizon $T$. Formally, we study the expected regret
\begin{equation}\label{eqn:regret}
\mathbb{E}[R(T)]=\mathbb{E}\left[\sum_{t=1}^TY_{t,A^*}-Y_{t,A_t}\right],
\end{equation}
where the expectation is taken over the randomness in the action sequence $(A_1,\ldots,A_T)$ and the outcomes $(\bY_t)_{t\in\mathbb{N}}$ and over the prior distribution over $p^*$. This notion of regret is also known as \emph{Bayesian regret} or \emph{Bayes risk}.

\subsection{Graph Feedback Model}
In this problem, we assume the existence of side observations, which is described by a graph $G_t=(\cK,\cE_t)$ over the action set for each time $t$. The graph $G_t$ may be directed or undirected and can be dependent on time $t$. At each time $t$, the decision maker observes the reward $Y_{t,A_t}$ for playing action $A_t$ as well as the outcome $Y_{t,a}$ for each action $a\in\{a\in\cK|(A_t,a)\in\cE_t\}$. Note that it becomes the classical bandit feedback setting when the graph is empty (i.e., no edge exists) and it becomes the full-information (expert) setting when the graph is complete for all time $t$. In this work, we study two types of graph feedback models: \emph{deterministic graph} and \emph{Erd{\H{o}}s-R{\'e}nyi random graph}.

{\bf Deterministic graph.} In the deterministic graph feedback model, we assume that the graph $G_t$ is fixed before the decision is made at each time $t$. Let $\bG_t\in\mathbb{R}^{K\times K}$  be the adjacent matrix that represents the deterministic graph feedback structure $G_t$. Let $\bG_t(i,j)$ be the element at the $i$-th row and $j$-th column of the matrix. Then $\bG_t(i,j)=1$ if there exists an edge $(i,j)\in\cE_t$ and $\bG_t(i,j)=0$ otherwise. Note that we assume $\bG_t(i,i)=1$ for any $i\in\cK$. 
\begin{definition}\emph{(Clique cover number)
A \emph{Clique} of a graph $G=(\cK,\cE)$ is a subset $S\subseteq \cK$ such that the sub-graph formed by $S$ and $\cE$ is a complete graph.
A \emph{Clique cover} of a graph $G=(\cK,\cE)$ is a partition of $\cK$, denoted by $\mathcal{C}$, such that $S$ is a clique for each $S\in\mathcal{C}$. 
The cardinality of the smallest clique cover is called the \emph{clique cover number}, which is denoted by $\chi(G)$.
}
\end{definition}

In this work, we slightly abuse the notation of clique cover number and use $\chi(G_t)$ and $\chi(\bG_t)$ interchangeably since $\bG_t$ fully characterizes the graph structure $G_t$. 

{\bf Erd{\H{o}}s-R{\'e}nyi Random Graph.} In the Erd{\H{o}}s-R{\'e}nyi random graph feedback model, we assume that the graph $G_t$ is generated from an Erd{\H{o}}s-R{\'e}nyi model with time-dependent parameter $r_t$ after the decision is made at each time $t$. 
In other words, the decision maker can reveal the outcome $Y_{t,a}$ with probability $r_t$ for each action $a\neq A_t$ at time $t$. This feedback model is also known as \emph{probabilistically triggered arms}~\cite{chen2016combinatorial}. 

We generalize the adjacent matrix representation of a deterministic graph feedback model to a random graph feedback model, such that each $(i,j)$-th element of the matrix is the probability of observing action $j$ via playing action $i$. For each time $t$, the adjacent matrix is denoted by $\bG_t$ to unify the representation of our algorithms and analysis. Then, we have that $\bG_t(i,i)=1$ for any $i\in\cK$ and $\bG_t(i,j)=r_t$ for any $i\neq j$. Note that parameter $r_t$ fully characterizes the random graph feedback model.

%
%

\subsection{Randomized Policies}
We define all random variables with respect to a probability space $(\Omega,\mathcal{F},\mathbb{P})$. Consider the filtration $(\mathcal{F}_t)_{t\in\mathbb{N}}$ such that $\mathcal{F}_t\subseteq \mathcal{F}$ is the $\sigma$-algebra generated by the observation history $O_{t-1}$. The observation history $O_t$ includes all decisions, rewards and side observations from time $1$ to time $t$. For each time $t$, the decision maker chooses an action based on the history $O_{t-1}$ and possibly some randomness. Any policy of the decision maker can be viewed as a \emph{randomized policy} $\bpi$, which is an $\mathcal{F}_t$-adapted sequence $(\bpi_t)_{t\in\mathbb{N}}$. For each time $t$, the decision maker chooses an action randomly according to $\bpi_t(\cdot)=\mathbb{P}(A_t=\cdot|\mathcal{F}_t)$, which is a probability distribution over $\cK$. Let $\mathbb{E}[R(T,\bpi)]$ be the Bayesian regret defined by~(\ref{eqn:regret}) when the decisions $(A_1,\ldots,A_T)$ are chosen according to $\bpi$.

Uncertainty about $p^*$ induces uncertainty about the true optimal action $A^*$, which is described by a prior distribution $\balpha_1$ of $A^*$. Let $\balpha_t$ be the posterior distribution of $A^*$ given the history $O_{t-1}$, i.e., $\balpha_t(\cdot)=\mathbb{P}(A^*=\cdot|\mathcal{F}_t)$. Then, $\balpha_{t+1}$ can be updated by Bayes rule given $\balpha_{t}$, decision $A_t$, reward $Y_{t,A_t}$ and side observations. The \emph{Shannon entropy} of $\balpha_t$ is defined as $H(\balpha_t)\triangleq-\sum_{i\in\cK}\balpha_t(i)\log(\balpha_t(i))$. We slightly abuse the notion of $\bpi_t$ and $\balpha_t$ such that they represent distributions (or functions) over finite set $\cK$ as well as vectors in a simplex $\mathcal{S}\subset\mathbb{R}^K$. Note that $\mathcal{S}=\{\bpi\in\mathbb{R}^K|\sum_{i=1}^K\bpi(i)=1, \bpi(i)\geq 0, \forall i\in\cK\}$.

Let $\bdelta_t$ be the instantaneous regret vector such that the $i$-th coordinate, $\bdelta_t(i)\triangleq\mathbb{E}[Y_{t,A^*}-Y_{t,i}|\mathcal{F}_t]$, is the expected regret of playing action $i$ at time $t$. Let $\bg_t$ be the information gain vector such that the $i$-th coordinate, $\bg_t(i)=\mathbb{E}[H(\balpha_t)-H(\balpha_{t+1})|\mathcal{F}_t,A_t=i]$, is the expected information gain of playing action $i$ at time $t$. Note that the information gain of playing action $i$ consists of that of observing the reward $Y_{t,i}$ and possibly some side observations. We define the information gain of observing action $a$ (i.e., $Y_{t,a}$) as $\bh_t(a)\triangleq I_t(A^*;Y_{t,a})$, which is the \emph{mutual information} under the posterior distribution between random variables $A^*$ and $Y_{t,a}$. Let $D(\cdot ||\cdot)$ be the \emph{Kullback-Leibler} divergence between two distributions\footnote{If $P$ is absolutely continuous with respect to $Q$, then $D(P||Q)=\int \log\left(\frac{\text{d}P}{\text{d}Q}\right)\text{d}P$, where $\frac{\text{d}P}{\text{d}Q}$ is the Radon-Nikodym derivative of $P$ w.r.t. $Q$.}. By the definition of mutual information, we have that $I_t(A^*;Y_{t,a})\triangleq$
\begin{equation}
D(\mathbb{P}((A^*,Y_{t,a})\in\cdot|\mathcal{F}_t)||\mathbb{P}(A^*\in\cdot|\mathcal{F}_t)\mathbb{P}(Y_{t,a}\in\cdot|\mathcal{F}_t)).
\end{equation}
The following proposition reveals the relationship between vector $\bg_t$ and $\bh_t$.
\begin{proposition}\label{prop:infogain}
Under the (deterministic or random) graph feedback $\bG_t$, we have $\bg_t\geq\bG_t \bh_t.$
\end{proposition}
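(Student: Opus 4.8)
The plan is to read off $\bg_t(i)$, coordinate by coordinate, as a mutual information under the posterior, and then deduce $\bg_t(i)\ge(\bG_t\bh_t)(i)$ from a ``superadditivity of information'' inequality that rests on the arms' rewards being independent.

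First I would fix an action $i$ and unpack the definitions. When $A_t=i$, the data folded into $\cF_{t+1}$ is the batch $(S,(Y_{t,a})_{a\in S})$, where $S\subseteq\cK$ is the observed set: in the deterministic model $S=\{a:\bG_t(i,a)=1\}$ is fixed, while in the Erd{\H{o}}s-R{\'e}nyi model $i\in S$ always and, independently, each $a\neq i$ lies in $S$ with probability $\bG_t(i,a)=r_t$. Writing out $\balpha_t$, $\balpha_{t+1}$ and the Shannon entropy, the expected entropy drop $\mathbb{E}[H(\balpha_t)-H(\balpha_{t+1})\mid\cF_t,A_t=i]$ is exactly the mutual information, computed under $\mathbb{P}(\cdot\mid\cF_t)$, between $A^*$ and that batch; so $\bg_t(i)=I_t\!\big(A^*;(S,(Y_{t,a})_{a\in S})\big)$.

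Next I would re-encode the batch coordinatewise: for $a\in\cK$ put $Z_a=\mathbbm{1}\{a\in S\}$ and $W_a=(Z_a,Z_aY_{t,a})$, so that $(S,(Y_{t,a})_{a\in S})$ and $(W_a)_{a\in\cK}$ determine one another and $\bg_t(i)=I_t(A^*;(W_a)_{a\in\cK})$. The structural point I would isolate is that, conditionally on $\cF_t$, the family $\{W_a\}_{a\in\cK}$ is mutually independent: the rewards $\{Y_{t,a}\}$ are independent across $a$ by the model, and the membership indicators $\{Z_a\}$ are (trivially in the deterministic case, by the Erd{\H{o}}s-R{\'e}nyi construction in the random case) independent of one another and of $(A^*,\bY_t)$. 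Then the chain rule gives $I_t(A^*;(W_a)_a)=\sum_a\big(H(W_a\mid W_{<a})-H(W_a\mid W_{<a},A^*)\big)$; mutual independence replaces $H(W_a\mid W_{<a})$ by $H(W_a)$, and monotonicity of entropy under extra conditioning gives $H(W_a\mid W_{<a},A^*)\le H(W_a\mid A^*)$, whence $I_t(A^*;(W_a)_a)\ge\sum_a I_t(A^*;W_a)$. A one-line computation---using $Z_a\perp(A^*,Y_{t,a})$ and that $Z_aY_{t,a}$ carries no information about $A^*$ when $Z_a=0$---gives $I_t(A^*;W_a)=\mathbb{P}(Z_a=1\mid\cF_t)\,I_t(A^*;Y_{t,a})=\bG_t(i,a)\,\bh_t(a)$. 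Summing over $a$ yields $\bg_t(i)\ge(\bG_t\bh_t)(i)$, and since $i$ is arbitrary, $\bg_t\ge\bG_t\bh_t$.

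The only place that needs care---the main obstacle---is the superadditivity step. The naive move would be to assert $H(W_a\mid W_{<a},A^*)=H(W_a\mid A^*)$ by ``conditional independence of the rewards given $A^*$,'' but this is wrong: $A^*=\arg\max_a\mathbb{E}[Y_{t,a}\mid p^*]$ couples all the arms, so the rewards are typically dependent once $A^*$ is fixed. The fix is to use only the \emph{unconditional} (given $\cF_t$) independence of the $W_a$'s, which is all that is needed to turn $H(W_a\mid W_{<a})$ into $H(W_a)$, and then to drop $W_{<a}$ from $H(W_a\mid W_{<a},A^*)$ purely by monotonicity. This keeps the argument identical across the two feedback models, since in both $\mathbb{P}(a\in S\mid\cF_t,A_t=i)=\bG_t(i,a)$.
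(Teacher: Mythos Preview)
Your proof is correct and follows the same essential strategy as the paper: write $\bg_t(i)$ as the mutual information between $A^*$ and the observation batch, then use the mutual independence of the per-arm observations (given $\cF_t$) to obtain the superadditivity $I_t(A^*;W_1,\ldots,W_K)\ge\sum_a I_t(A^*;W_a)$. Your chain-rule-plus-monotonicity derivation of superadditivity is exactly the content of the paper's Lemma~\ref{lem:addindpt} (there phrased via $I(X_1;X_2)-I(X_1;X_2\mid A)=-I(X_1;X_2\mid A)\le 0$ when $X_1\perp X_2$), and your remark about the ``main obstacle''---that one must not assume conditional independence of the arms given $A^*$---is precisely the point that lemma is designed to handle. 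The one substantive difference is in the random-graph case: the paper fixes a realized graph, applies the deterministic bound pointwise, and then averages over the Erd{\H{o}}s--R{\'e}nyi randomness; you instead fold the edge indicator $Z_a$ into $W_a=(Z_a,Z_aY_{t,a})$ and run a single argument for both models, extracting $\bG_t(i,a)=\mathbb{P}(Z_a=1\mid\cF_t)$ only at the last step. This buys you a cleaner unified proof and makes transparent that observing the random graph itself carries no information about $A^*$.
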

Intuitively, Proposition \ref{prop:infogain} shows that the information gain of observing the reward and some side observations is at least the sum of the information gain of each individual observation. A formal proof is provided in Appendix \ref{proof:infogain} in the supplemental material.

At each time $t$, a randomized policy updates $\balpha_t$, $\bdelta_t$ and $\bh_t$ and makes a decision according to a sampling distribution $\bpi_t$.

\section{Algorithms}
For any randomized policy, we define the \emph{information ratio} of sampling distribution $\bpi_t$ at time $t$ as 
\begin{equation}
\Psi_t(\bpi_t)\triangleq{(\bpi_t^T\bdelta_t)^2}/{(\bpi_t^T\bg_t)}.
\end{equation}
Note that $\bpi_t^T\bdelta_t$ is the expected instantaneous regret of the sampling distribution $\bpi_t$, and $\bpi_t^T\bg_t$ is the expected information gain of the sampling distribution $\bpi_t$. So the information ratio $\Psi_t(\bpi_t)$ measures the ``energy'' cost (which is the square of the expected instantaneous regret) per bit of information acquired. 

The key idea of the IDS based policy is keeping the information ratio bounded in order to balance between having low expected instantaneous regret (a.k.a. exploitation) and obtaining knowledge about the optimal action (a.k.a. exploration). In other words, if the information ratio is bounded, then the expected regret is bounded in terms of the maximum amount of information one could expect to acquire, which is at most the entropy of the prior distribution of $A^*$, i.e., $H(\balpha_1)$. As we show in Section \ref{sec:regret}, we can find upper bounds for the information ratios of the policies we provide here.

\begin{algorithm}[tb]
\caption{Meta-algorithm for Information Directed Sampling with Graph Feedback}
\label{alg:IDS}
\begin{algorithmic}
\REQUIRE Time horizon $T$ and feedback graph model $(\bG_t)_{t\leq T}$
\FOR{$t$ {\bfseries from} $1$ {\bfseries to} $T$}
\STATE{{\bf Updating statistics:} compute $\balpha_t$, $\bdelta_t$ and $\bh_t$ accordingly.}
\STATE{{\bf Generating policy:} generate $\bpi_t$ as a function of ($\balpha_t$, $\bdelta_t$, $\bh_t$, $\bG_t$). (To be determined)}
\STATE{{\bf Sampling:} sample $A_t$ according to $\bpi_t$, play action $A_t$ and receive reward $Y_{t,A_t}$.}
\STATE{{\bf Observations:} observe $Y_{t,a}$ if $(A_t,a)\in\cE_t$, where $G_t=(\cK,\cE_t)$ is the graph generated by $\bG_t$.}
\ENDFOR
\end{algorithmic}
\end{algorithm}

In practice, the information gain vector $\bg_t$ is quite complicated to compute even assuming a Bernoulli distribution model for each action. However, computing the information gain of observing each individual action, i.e., $\bh_t$, is much easier since it is only the mutual information of two random variables. By Proposition \ref{prop:infogain}, we have that $\Psi_t(\bpi_t)\leq{(\bpi_t^T\bdelta_t)^2}/{(\bpi_t^T\bG_t\bh_t)}$. So we can design our IDS based policies according to $\bh_t$ and $\bG_t$ instead of $\bg_t$. We provide a meta-algorithm for IDS based policies in Algorithm \ref{alg:IDS}. What remains is to design $\bpi_t$ as a function of $\balpha_t$, $\bdelta_t$, $\bh_t$ and $\bG_t$. Note that one can replace $\bG_t\bh_t$ by $\bg_t$ in the IDS based algorithms and the regret results in Section~\ref{sec:regret} still hold.

{\bf TS-N policy} is a natural adaption of Thompson Sampling under the graph feedback. It replaces the generating policy step in Algorithm \ref{alg:IDS} by
\begin{equation}
\bpi_t^{\text{TS-N}}=\balpha_t.
\end{equation}
The TS-N policy ignores the graph structure information $\bG_t$, and sample the action according to the posterior distribution of $A^*$. 

{\bf IDS-N policy} replaces the generating policy step in Algorithm \ref{alg:IDS} by $\bpi_t^{\text{IDS-N}}$, which is the solution of the following optimization problem $P_1$.
\begin{align}
P_{1}: \min_{\bpi_t\in\mathcal{S}} &\ {(\bpi_t^T\bdelta_t)^2}/{(\bpi_t^T\bG_t\bh_t)}.
\end{align}
The IDS-N policy greedily minimizes the information ratio (upper bound) at each time.

{\bf IDSN-LP policy}  replaces the generating policy step in Algorithm \ref{alg:IDS} by $\bpi_t^{\text{IDSN-LP}}$, which is the solution of the following linear programming problem $P_2$.
\begin{align}
P_{2}: \min_{\bpi_t\in\mathcal{S}} &\ \bpi_t^T\bdelta_t ~~~\mbox{s.t.} \ \bpi_t^T\bG_t\bh_t\geq\balpha_t^T\bG_t\bh_t.
\end{align}
The IDSN-LP policy greedily minimizes the expected instantaneous regret at each time with the constraint that the information gain is at least the one obtained by TS-N policy.

{\bf IDS-LP policy}  replaces the generating policy step in Algorithm \ref{alg:IDS} by $\bpi_t^{\text{IDS-LP}}$, which is the solution of the following linear programming problem $P_3$.
\begin{align}
P_{3}: \min_{\bpi_t\in\mathcal{S}} &\ \bpi_t^T\bdelta_t ~~~\mbox{s.t.} \ \bpi_t^T\bG_t\bh_t\geq\balpha_t^T\bh_t.
\end{align}
The IDS-LP policy greedily minimizes the expected instantaneous regret at each time with the constraint that the information gain is at least the one obtained by TS policy without graph feedback. IDS-LP policy reduces the extent of exploration compared to IDSN-LP policy. Intuitively, it greedily exploits the current knowledge of the optimal action with controlled exploration. Though we can not find better regret bound for IDS-LP than IDSN-LP, IDS-N and TS-N, IDS-LP outperforms the others in numerical results as shown in Section \ref{sec:simul}.

\section{Regret Analysis}\label{sec:regret}
In this section, we first present a known general bound for any randomized policy and provide the regret upper bound results of the proposed policies for the deterministic and random graph feedback.
The regret analysis relies on the following bound, which is shown in~\cite{journals/corr/RussoR14a}.
\begin{lemma}\label{prop:generalbound}
\emph{(General Bound from~\cite{journals/corr/RussoR14a})} For any policy $\bpi=(\bpi_1,\bpi_2,\bpi_3,\ldots)$ and time horizon $T\in \mathbb{N}$,
\begin{equation}
\mathbb{E}[R(T,\bpi)]\leq\sqrt{\sum_{t=1}^T\mathbb{E}_{\bpi}[\Psi_t(\bpi_t)]H(\balpha_1)}.
\end{equation}
\end{lemma}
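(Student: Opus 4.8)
The plan is to follow the information-theoretic argument of Russo and Van Roy: bound the cumulative expected regret by the cumulative information gain via a per-step Cauchy--Schwarz inequality, then telescope the information gain against the prior entropy $H(\balpha_1)$.

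First I would record two identities. By the tower property and the definition of $\bdelta_t$, the cumulative Bayesian regret satisfies $\mathbb{E}[R(T,\bpi)]=\sum_{t=1}^T\mathbb{E}_{\bpi}[\bpi_t^T\bdelta_t]$, since conditioned on $\mathcal{F}_t$ the action $A_t$ is drawn from $\bpi_t$ and $\mathbb{E}[Y_{t,A^*}-Y_{t,A_t}\mid\mathcal{F}_t]=\sum_{i\in\cK}\bpi_t(i)\bdelta_t(i)$. Likewise, since $H(\balpha_t)$ is $\mathcal{F}_t$-measurable and $\bg_t(i)=\mathbb{E}[H(\balpha_t)-H(\balpha_{t+1})\mid\mathcal{F}_t,A_t=i]$, we get $\bpi_t^T\bg_t=\mathbb{E}[H(\balpha_t)-H(\balpha_{t+1})\mid\mathcal{F}_t]$. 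The definition of the information ratio then gives the pointwise factorization $\bpi_t^T\bdelta_t=\sqrt{\Psi_t(\bpi_t)}\,\sqrt{\bpi_t^T\bg_t}$ on the event $\{\bpi_t^T\bg_t>0\}$; on the complementary event the asserted bound is vacuous, since $\Psi_t(\bpi_t)=\infty$ whenever $\bpi_t^T\bdelta_t>0$.

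Next I would apply Cauchy--Schwarz twice. For each fixed $t$, Cauchy--Schwarz on the probability space gives $\mathbb{E}_{\bpi}[\sqrt{\Psi_t(\bpi_t)}\sqrt{\bpi_t^T\bg_t}]\le\sqrt{\mathbb{E}_{\bpi}[\Psi_t(\bpi_t)]}\,\sqrt{\mathbb{E}_{\bpi}[\bpi_t^T\bg_t]}$; summing over $t$ and using Cauchy--Schwarz over the time index yields
\[
\mathbb{E}[R(T,\bpi)]\le\sum_{t=1}^T\sqrt{\mathbb{E}_{\bpi}[\Psi_t(\bpi_t)]}\,\sqrt{\mathbb{E}_{\bpi}[\bpi_t^T\bg_t]}\le\sqrt{\sum_{t=1}^T\mathbb{E}_{\bpi}[\Psi_t(\bpi_t)]}\;\sqrt{\sum_{t=1}^T\mathbb{E}_{\bpi}[\bpi_t^T\bg_t]}.
\]
It then remains to bound the second factor: using the entropy-difference identity and telescoping,
\[
\sum_{t=1}^T\mathbb{E}_{\bpi}[\bpi_t^T\bg_t]=\mathbb{E}_{\bpi}\!\left[\sum_{t=1}^T\big(H(\balpha_t)-H(\balpha_{t+1})\big)\right]=H(\balpha_1)-\mathbb{E}_{\bpi}[H(\balpha_{T+1})]\le H(\balpha_1),
\]
since $\balpha_1$ is the deterministic prior and the Shannon entropy of a distribution on the finite set $\cK$ is nonnegative. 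Substituting gives $\mathbb{E}[R(T,\bpi)]\le\sqrt{\sum_{t=1}^T\mathbb{E}_{\bpi}[\Psi_t(\bpi_t)]\,H(\balpha_1)}$.

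I expect the main obstacle to be the bookkeeping around conditioning rather than the inequalities themselves: verifying that the per-step factorization and the entropy-difference identity hold in an $\mathcal{F}_t$-measurable sense, and disposing cleanly of the degenerate case $\bpi_t^T\bg_t=0$. Since the statement is quoted from~\cite{journals/corr/RussoR14a}, the cleanest exposition is to present exactly this chain and defer the measure-theoretic details to that reference.
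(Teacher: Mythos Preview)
Your proposal is correct and is precisely the standard Russo--Van~Roy argument: decompose the per-step regret as $\sqrt{\Psi_t(\bpi_t)}\sqrt{\bpi_t^T\bg_t}$, apply Cauchy--Schwarz over the probability space and over $t$, and telescope $\sum_t\mathbb{E}[\bpi_t^T\bg_t]=H(\balpha_1)-\mathbb{E}[H(\balpha_{T+1})]\le H(\balpha_1)$. The paper does not supply its own proof of this lemma at all; it simply quotes the bound from~\cite{journals/corr/RussoR14a} and uses it as a black box, so there is nothing to compare against beyond noting that your write-up matches the cited source.
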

Lemma~\ref{prop:generalbound} shows that we only need to bound expected information ratio $\mathbb{E}_{\bpi}[\Psi_t(\bpi_t)]$ to obtain an upper bound for a randomized policy. The next result follows from the fact that the information ratio of IDS-LP policy can be bounded by $K/2$.

\begin{theorem}\label{thm:IDSLP}
For any (deterministic or random) graph feedback, the Bayesian regret of IDS-LP is 
\begin{equation}\label{IDSLPbound}
\mathbb{E}[R(T,\bpi^{\text{IDS-LP}})]\leq\sqrt{\frac{K}{2}TH(\balpha_1)}.
\end{equation}
\end{theorem}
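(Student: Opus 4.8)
The plan is to reduce the whole statement to a pathwise bound on the information ratio and then invoke Lemma~\ref{prop:generalbound}. Concretely, I will show that $\Psi_t(\bpi_t^{\text{IDS-LP}})\le K/2$ holds conditionally on $\mathcal{F}_t$ for every $t$; since Lemma~\ref{prop:generalbound} only requires a bound on $\mathbb{E}_{\bpi}[\Psi_t(\bpi_t)]$, this immediately gives $\mathbb{E}[R(T,\bpi^{\text{IDS-LP}})]\le\sqrt{\sum_{t=1}^T (K/2) H(\balpha_1)}=\sqrt{\tfrac{K}{2}TH(\balpha_1)}$.

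The first step is to compare the IDS-LP distribution with the Thompson sampling distribution $\balpha_t$. Since $\bh_t\ge 0$ (mutual information is nonnegative) and $\bG_t$ has unit diagonal and nonnegative off-diagonal entries in both the deterministic and the Erd\H{o}s--R\'enyi case, we get $\bG_t\bh_t\ge\bh_t$ coordinatewise, hence $\balpha_t^T\bG_t\bh_t\ge\balpha_t^T\bh_t$. Thus $\balpha_t$ is feasible for the linear program $P_3$, and by optimality of $\bpi_t^{\text{IDS-LP}}$ we obtain $(\bpi_t^{\text{IDS-LP}})^T\bdelta_t\le\balpha_t^T\bdelta_t$ while $(\bpi_t^{\text{IDS-LP}})^T\bG_t\bh_t\ge\balpha_t^T\bh_t$ by feasibility. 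Combining these with Proposition~\ref{prop:infogain} (which yields $\Psi_t(\bpi_t)\le(\bpi_t^T\bdelta_t)^2/(\bpi_t^T\bG_t\bh_t)$) gives
\begin{equation*}
\Psi_t(\bpi_t^{\text{IDS-LP}})\le\frac{\bigl((\bpi_t^{\text{IDS-LP}})^T\bdelta_t\bigr)^2}{(\bpi_t^{\text{IDS-LP}})^T\bG_t\bh_t}\le\frac{(\balpha_t^T\bdelta_t)^2}{\balpha_t^T\bh_t}.
\end{equation*}
The right-hand side is exactly the information ratio of vanilla Thompson sampling under bandit feedback: $\balpha_t^T\bdelta_t$ is its expected instantaneous regret and $\balpha_t^T\bh_t=\sum_i\balpha_t(i)I_t(A^*;Y_{t,i})$ its expected information gain (only the played arm is observed). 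If $\balpha_t^T\bh_t=0$ then every $Y_{t,i}$ is independent of $A^*$ under the posterior, so $\balpha_t^T\bdelta_t=0$ as well (using $\bdelta_t\ge 0$), and the ratio is read as $0$.

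The remaining step, which I expect to be the only real obstacle, is the classical bound $(\balpha_t^T\bdelta_t)^2/(\balpha_t^T\bh_t)\le K/2$ of~\cite{russo2014learning,journals/corr/RussoR14a}, which I would either cite directly or reprove as follows. Writing $p_a=\balpha_t(a)$, $\mu_a=\mathbb{E}[Y_{t,a}|\mathcal{F}_t]$ and $\mu_{a|a^*}=\mathbb{E}[Y_{t,a}|\mathcal{F}_t,A^*=a^*]$, one checks $\balpha_t^T\bdelta_t=\sum_a p_a(\mu_{a|a}-\mu_a)=\mathrm{trace}(M)$, where $M$ is the $K\times K$ matrix with entries $M_{a,a^*}=\sqrt{p_ap_{a^*}}\,(\mu_{a|a^*}-\mu_a)$. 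Pinsker's inequality together with $Y_{t,a}\in[0,1]$ gives $I_t(A^*;Y_{t,a})=\sum_{a^*}p_{a^*}D(\cdot\,\|\,\cdot)\ge 2\sum_{a^*}p_{a^*}(\mu_{a|a^*}-\mu_a)^2$, so $\balpha_t^T\bh_t\ge 2\|M\|_F^2$, while a Cauchy--Schwarz (rank) argument gives $\mathrm{trace}(M)^2\le K\,\|M\|_F^2$; dividing yields $K/2$. Everything outside this last step is just the short observation that IDS-LP can never have a larger information ratio than TS because $\balpha_t$ is always feasible for $P_3$, after which Lemma~\ref{prop:generalbound} closes the argument.
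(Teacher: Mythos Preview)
Your proposal is correct and follows essentially the same approach as the paper: bound $\Psi_t(\bpi_t^{\text{IDS-LP}})$ via Proposition~\ref{prop:infogain}, use the $P_3$ constraint together with feasibility of $\balpha_t$ to reduce to the bandit-feedback Thompson-sampling ratio $(\balpha_t^T\bdelta_t)^2/(\balpha_t^T\bh_t)$, and then invoke the standard $K/2$ bound (which the paper records as Lemma~\ref{lem:bandit} with the same Cauchy--Schwarz/Pinsker argument you sketch). Your extra justification that $\balpha_t$ is feasible for $P_3$ (via $\bG_t\bh_t\ge\bh_t$) and your handling of the $0/0$ case are details the paper leaves implicit but does not do differently.
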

The key idea of the proof is comparing the information ratio of IDS-LP to that of TS with bandit feedback. The detailed proof of Theorem \ref{thm:IDSLP} can be found in Appendix \ref{proof:IDSLP}. The next proposition shows a general bound for information ratios of TS-N, IDS-N and IDSN-LP policies. 

\begin{proposition}\label{prop:ratiobound}
For any (deterministic or random) graph feedback $\bG_t$, we have that $\Psi_t(\bpi_t^{\text{TS-N}})$, $\Psi_t(\bpi_t^{\text{IDS-N}})$ and $\Psi_t(\bpi_t^{\text{IDSN-LP}})$ are upper-bounded by $\psi_t\triangleq\frac{\left(\bdelta_t^T\balpha_t\right)^2}{(\bG_t\bh_t)^T\balpha_t}$.
\end{proposition}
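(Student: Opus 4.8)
The plan is to reduce the entire statement to Proposition~\ref{prop:infogain} plus the defining optimization problem of each policy, using the posterior $\balpha_t$ itself as a common ``competitor'' feasible point. First I would record the pointwise inequality that gets reused three times: for any $\bpi\in\mathcal{S}$ we have $\bpi\ge 0$ componentwise, while $\bG_t$ has nonnegative entries and $\bh_t(a)=I_t(A^*;Y_{t,a})\ge 0$, so $\bG_t\bh_t\ge 0$ and hence $\bpi^T\bg_t\ge\bpi^T\bG_t\bh_t\ge 0$ by Proposition~\ref{prop:infogain}. This gives
\[
\Psi_t(\bpi)=\frac{(\bpi^T\bdelta_t)^2}{\bpi^T\bg_t}\ \le\ \frac{(\bpi^T\bdelta_t)^2}{\bpi^T\bG_t\bh_t}=:\widehat\Psi_t(\bpi),
\]
so that in particular $\widehat\Psi_t(\balpha_t)=\psi_t$, and it suffices to bound $\widehat\Psi_t$ at the three policies by $\psi_t$.

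Two of the three cases I expect to be immediate. For TS-N, $\bpi_t^{\text{TS-N}}=\balpha_t$, so $\Psi_t(\bpi_t^{\text{TS-N}})\le\widehat\Psi_t(\balpha_t)=\psi_t$ with nothing to prove. For IDS-N, the policy $\bpi_t^{\text{IDS-N}}$ is by construction a minimizer of $\widehat\Psi_t$ over $\mathcal{S}$ (that is exactly problem $P_1$), and since $\balpha_t\in\mathcal{S}$ is a feasible point of $P_1$, $\widehat\Psi_t(\bpi_t^{\text{IDS-N}})\le\widehat\Psi_t(\balpha_t)=\psi_t$; combining with the display above closes this case.

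The main work is the IDSN-LP case, because $P_2$ is a constrained linear program rather than a ratio minimization, so I cannot directly compare objective values. The key observation would be that $\balpha_t$ is feasible for $P_2$ — it satisfies the constraint $\bpi^T\bG_t\bh_t\ge\balpha_t^T\bG_t\bh_t$ with equality. Writing $\bpi:=\bpi_t^{\text{IDSN-LP}}$, optimality of the objective then gives $\bpi^T\bdelta_t\le\balpha_t^T\bdelta_t$, while feasibility gives $\bpi^T\bG_t\bh_t\ge\balpha_t^T\bG_t\bh_t$. Since $\bdelta_t\ge 0$ componentwise (because $A^*$ is optimal under the posterior), both $\bpi^T\bdelta_t$ and $\balpha_t^T\bdelta_t$ are nonnegative and the first inequality survives squaring; dividing a smaller nonnegative numerator by a larger positive denominator yields $\widehat\Psi_t(\bpi)\le\widehat\Psi_t(\balpha_t)=\psi_t$, and the pointwise bound finishes it. I would also note that replacing ``$\bG_t\bh_t$'' by the true $\bg_t$ throughout leaves the argument unchanged, since $P_1$, $P_2$ and $\psi_t$ would just be defined with $\bg_t$.

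I do not anticipate a serious obstacle; the only delicate point is the degenerate case $(\bG_t\bh_t)^T\balpha_t=0$, where $\psi_t$ is not literally a fraction. I would handle it with the convention $0/0=0$ used in~\cite{russo2014learning}, together with the observation that $(\bG_t\bh_t)^T\balpha_t=0$ forces $\bh_t(i)=0$ (hence $Y_{t,i}$ independent of $A^*$ under the posterior) for every $i$ in the support of $\balpha_t$, which in turn forces $\bdelta_t^T\balpha_t=0$; since each of the three objectives is nonnegative and vanishes at (or is minimized by) $\balpha_t$, all three information ratios collapse to $0\le\psi_t$ and the bound is vacuous. Apart from this bookkeeping, the proof is just ``feasibility/optimality of $\balpha_t$'' layered on top of Proposition~\ref{prop:infogain}.
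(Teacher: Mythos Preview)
Your proposal is correct and follows essentially the same route as the paper: bound $\Psi_t$ by $\widehat\Psi_t$ via Proposition~\ref{prop:infogain}, then compare each policy to $\balpha_t$ using, respectively, equality (TS-N), optimality of $P_1$ with $\balpha_t$ feasible (IDS-N), and the combination of the $P_2$ constraint on the denominator with optimality/feasibility of $\balpha_t$ on the numerator (IDSN-LP). Your explicit remark that $\bdelta_t\ge 0$ is needed to square the inequality, and your treatment of the degenerate $0/0$ case, are details the paper leaves implicit but does not address; they do not change the argument.
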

The proof of Proposition \ref{prop:ratiobound} can be found in Appendix \ref{proof:ratiobound}. Combining this result with Lemma \ref{prop:generalbound}, we can obtain unified regret result for TS-N, IDS-N and IDSN-LP by bounding the ratio $\psi_t$. Now, we are ready to present the regret results separately for the deterministic and the random graph feedback.

\subsection{Deterministic Graph}
The following result shows the unified regret upper bound of TS-N, IDS-N and IDSN-LP under the deterministic graph feedback. The detailed proof is presented in Apendix \ref{proof:TSNdeterm}.
\begin{theorem}\label{thm:TSNdeterm}
For any deterministic graph feedback $(\bG_1,\bG_2,\bG_3,\ldots)$, the Bayesian regrets of TS-N, IDS-N and IDSN-LP are upper-bounded by
\begin{equation}
\sqrt{\sum_{t=1}^T\frac{\chi(\bG_t)}{2}H(\balpha_1)}.
\end{equation}
\end{theorem}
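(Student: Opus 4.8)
The plan is to derive the theorem from Lemma~\ref{prop:generalbound} and Proposition~\ref{prop:ratiobound}. By Proposition~\ref{prop:ratiobound}, for each of TS-N, IDS-N and IDSN-LP we have $\Psi_t(\bpi_t)\le\psi_t=(\bdelta_t^T\balpha_t)^2/\big((\bG_t\bh_t)^T\balpha_t\big)$, so it suffices to prove the pathwise (i.e.\ $\cF_t$-conditional) bound $\psi_t\le\chi(\bG_t)/2$; substituting into Lemma~\ref{prop:generalbound} then yields the stated regret, using that $\chi(\bG_t)$ is a fixed number for a deterministic feedback graph. Thus the core claim is $(\bdelta_t^T\balpha_t)^2\le\tfrac12\,\chi(\bG_t)\,(\bG_t\bh_t)^T\balpha_t$, which is a clique-cover refinement of the classical $K/2$ information-ratio bound for Thompson sampling in~\cite{russo2014learning,journals/corr/RussoR14a}.

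First I would set up the $\cF_t$-conditional means $M_t(a,b)=\mathbb{E}[Y_{t,b}\mid A^*=a,\cF_t]$ and $\bar{M}_t(b)=\mathbb{E}[Y_{t,b}\mid\cF_t]=\sum_{a}\balpha_t(a)M_t(a,b)$. Conditioning $\mathbb{E}[Y_{t,A^*}\mid\cF_t]$ on the value of $A^*$ gives the identity $\bdelta_t^T\balpha_t=\sum_{a\in\cK}\balpha_t(a)\big(M_t(a,a)-\bar{M}_t(a)\big)$. On the information side, applying Pinsker's inequality inside the definition $\bh_t(a)=I_t(A^*;Y_{t,a})$ (together with $Y_{t,a}\in[0,1]$) gives $\bh_t(a)\ge 2\sum_{a'}\balpha_t(a')\big(M_t(a',a)-\bar{M}_t(a)\big)^2\ge 0$.

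Next I would fix a minimum clique cover $\mathcal{C}$ of $G_t$, so $|\mathcal{C}|=\chi(\bG_t)$, and use that inside a clique every action is observed whenever some action of the clique is played: for $i\in S\in\mathcal{C}$, $(\bG_t\bh_t)(i)=\sum_{j}\bG_t(i,j)\bh_t(j)\ge\sum_{j\in S}\bh_t(j)$, hence $(\bG_t\bh_t)^T\balpha_t\ge\sum_{S\in\mathcal{C}}\big(\sum_{i\in S}\balpha_t(i)\big)\big(\sum_{j\in S}\bh_t(j)\big)$. I would then split the regret identity along $\mathcal{C}$ and apply Cauchy--Schwarz twice: once over the $\chi(\bG_t)$ cliques, producing the factor $\chi(\bG_t)$; and once over the actions of each clique $S$, giving $\big(\sum_{a\in S}\balpha_t(a)(M_t(a,a)-\bar{M}_t(a))\big)^2\le\big(\sum_{a\in S}\balpha_t(a)\big)\sum_{a\in S}\balpha_t(a)(M_t(a,a)-\bar{M}_t(a))^2$. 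Bounding the last factor by the within-clique information $\tfrac12\sum_{j\in S}\bh_t(j)$ — because $\balpha_t(a)(M_t(a,a)-\bar{M}_t(a))^2$ is the $a'=a$ summand of $\sum_{a'\in S}\balpha_t(a')(M_t(a',a)-\bar{M}_t(a))^2$, which by the Pinsker bound is at most $\tfrac12\bh_t(a)$ — and summing over $S$ produces $(\bdelta_t^T\balpha_t)^2\le\tfrac12\,\chi(\bG_t)\sum_{S\in\mathcal{C}}\big(\sum_{i\in S}\balpha_t(i)\big)\big(\sum_{j\in S}\bh_t(j)\big)\le\tfrac12\,\chi(\bG_t)(\bG_t\bh_t)^T\balpha_t$, i.e.\ $\psi_t\le\chi(\bG_t)/2$.

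I expect the delicate points to be the within-clique step: one must check that restricting the Pinsker sum from all $a'\in\cK$ to $a'\in S$ still retains the diagonal term needed to close the chain, and one must handle the degenerate case $(\bG_t\bh_t)^T\balpha_t=0$ (the same chain of inequalities then forces $\bdelta_t^T\balpha_t=0$, so $\psi_t$ can be taken to be $0$). The rest is bookkeeping: the outer Cauchy--Schwarz over cliques is exactly what turns the classical ``$K$'' into ``$\chi(\bG_t)$'', and feeding $\mathbb{E}_{\bpi}[\Psi_t(\bpi_t)]\le\chi(\bG_t)/2$ into Lemma~\ref{prop:generalbound} gives $\mathbb{E}[R(T,\bpi)]\le\sqrt{\sum_{t=1}^T\tfrac{\chi(\bG_t)}{2}H(\balpha_1)}$.
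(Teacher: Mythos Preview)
Your proposal is correct and follows essentially the same route as the paper: reduce via Lemma~\ref{prop:generalbound} and Proposition~\ref{prop:ratiobound} to bounding $\psi_t$, fix a minimum clique cover, lower bound $(\bG_t\bh_t)^T\balpha_t$ by the clique-aggregated information, rewrite $\bdelta_t^T\balpha_t$ via the diagonal identity, apply Cauchy--Schwarz across cliques to extract the factor $\chi(\bG_t)$, and then close the inner sums using Pinsker together with the observation that $\balpha_t(a)D\big(\mathbb{P}(Y_{t,a}\in\cdot\mid A^*=a,\cF_t)\,\|\,\mathbb{P}(Y_{t,a}\in\cdot\mid\cF_t)\big)$ is one summand of $\bh_t(a)$. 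The only cosmetic difference is within each clique: the paper applies Pinsker termwise and then Jensen for $\sqrt{\cdot}$, whereas you apply Cauchy--Schwarz with weights $\balpha_t(a)$ and then Pinsker on the squared terms; these two maneuvers are equivalent and lead to the identical inequality $\psi_t\le\chi(\bG_t)/2$.
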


Recently, a similar result for TS-N has been shown to be $\sqrt{\max_{t}\frac{\chi(\bG_t)}{2} TH(\balpha_1)}$ in~\cite{tossou2017thompson}. Apparently, Theorem~\ref{thm:TSNdeterm} provides a tighter bound. We have the following result when the graph is also time-invariant.
\begin{corollary}\label{cor:TSNdeterm}
For any time-invariant and deterministic graph feedback $\bG$ (i.e., $\bG_t=\bG$ $\forall t\in\mathbb{N}$), the Bayesian regrets of TS-N, IDS-N and IDSN-LP are upper-bounded by
\begin{equation}
\sqrt{\frac{\chi(\bG)}{2}TH(\balpha_1)}.
\end{equation}
\end{corollary}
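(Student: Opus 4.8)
The plan is to obtain this as an immediate specialization of Theorem~\ref{thm:TSNdeterm}, since a fixed graph $\bG$ is simply the constant sequence $\bG_t = \bG$. First I would substitute $\bG_t = \bG$ for every $t \in \mathbb{N}$ into the bound of Theorem~\ref{thm:TSNdeterm}. Because the clique cover number then equals the same constant $\chi(\bG)$ at every round, the sum inside the square root collapses to $\sum_{t=1}^T \frac{\chi(\bG_t)}{2} H(\balpha_1) = \frac{\chi(\bG)}{2} T H(\balpha_1)$, and taking the square root yields exactly $\sqrt{\tfrac{\chi(\bG)}{2} T H(\balpha_1)}$, which is the claimed bound for each of TS-N, IDS-N and IDSN-LP.

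All of the substantive work is inherited from the proof of Theorem~\ref{thm:TSNdeterm}: Proposition~\ref{prop:ratiobound} bounds the expected information ratio of the three policies by $\psi_t$, a clique-decomposition argument on the posterior $\balpha_t$ bounds $\psi_t$ by $\chi(\bG_t)$, and Lemma~\ref{prop:generalbound} converts the per-round ratio bounds into the regret bound. The corollary reuses this chain verbatim; the only simplification is that $\chi(\bG_t)$ is now $t$-independent, so it factors out of the time sum.

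Consequently there is no real obstacle here — the corollary is a one-line consequence of the preceding theorem. The only point that merits a moment's attention is that Lemma~\ref{prop:generalbound} carries the prior entropy $H(\balpha_1)$ as a common factor across all time steps, so pulling the constant $\chi(\bG)/2$ out of $\sum_{t=1}^T$ is legitimate and produces the clean product form $\tfrac{\chi(\bG)}{2} T H(\balpha_1)$ under the radical rather than a looser expression.
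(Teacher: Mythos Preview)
Your proposal is correct and matches the paper's approach exactly: the corollary is stated immediately after Theorem~\ref{thm:TSNdeterm} without a separate proof, as it is precisely the specialization $\bG_t=\bG$ that collapses $\sum_{t=1}^T \frac{\chi(\bG_t)}{2}$ to $\frac{\chi(\bG)}{2}T$.
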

{\blue
Corollary \ref{cor:TSNdeterm} shows that TS-N, IDS-N and IDSN-LP can benefit from the side observations. In other words, the above problem-independent regret upper bound scales with the clique cover number of the graph instead of the number of actions (it is known that the regret bound of TS without side observations is (\ref{IDSLPbound})). A similar result has been disclosed by Caron et al. \cite{bhagat} in the form of problem-dependent upper bound. They show that UCB-N scales with the clique cover number compared to UCB without side observations. 

An information theoretic lower bound on the problem-independent regret has been shown in \cite{mannor} to scale with the independence number\footnote{Independence number is the largest number of nodes without edges between them.}. In general, the independence number is less than or equal to the clique cover number. However, the equality holds for a large class of graphs, such as star graphs and perfect graphs. In other words, our policies are order-optimal for a large class of graphs.
}
\subsection{Erd{\H{o}}s-R{\'e}nyi Random Graph}
The following result shows the unified regret upper bound of TS-N, IDS-N and IDSN-LP under the Erd{\H{o}}s-R{\'e}nyi random graph feedback. The detailed proof is presented in Apendix \ref{proof:TSNrandom}.
\begin{theorem}\label{thm:TSNrandom}
For any random graph feedback $(r_1,r_2,r_3,\ldots)$, the Bayesian regrets of TS-N, IDS-N and IDSN-LP are upper-bounded by
\begin{equation}
\sqrt{\sum_{t=1}^T\frac{K}{2(Kr_t+1-r_t)}H(\balpha_1)}.
\end{equation}
\end{theorem}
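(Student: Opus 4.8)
The plan is to combine Lemma~\ref{prop:generalbound} with Proposition~\ref{prop:ratiobound}: it suffices to show that the quantity $\psi_t = (\bdelta_t^T\balpha_t)^2/((\bG_t\bh_t)^T\balpha_t)$ is bounded above by $\frac{K}{2(Kr_t+1-r_t)}$ for each $t$, and then the stated regret bound follows by plugging into $\mathbb{E}[R(T,\bpi)]\leq\sqrt{\sum_{t=1}^T\mathbb{E}_\bpi[\Psi_t(\bpi_t)]H(\balpha_1)}$ and using $\Psi_t\leq\psi_t$. So the whole argument reduces to a pointwise (in $t$) bound on $\psi_t$, and from now on I fix $t$ and drop it from the notation.

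First I would unpack $(\bG_t\bh_t)^T\balpha_t$ using the explicit form of the random-graph adjacency matrix: $\bG_t(i,i)=1$ and $\bG_t(i,j)=r_t$ for $i\neq j$. Writing $\bG_t = (1-r_t)I + r_t\mathbf{1}\mathbf{1}^T$, we get $(\bG_t\bh_t)(i) = (1-r_t)\bh(i) + r_t\sum_{a\in\cK}\bh(a)$. Hence $(\bG_t\bh_t)^T\balpha = (1-r_t)\sum_i\balpha(i)\bh(i) + r_t\big(\sum_a\bh(a)\big)\big(\sum_i\balpha(i)\big) = (1-r_t)\balpha^T\bh + r_t\,\mathbf{1}^T\bh$, since $\balpha$ is a probability vector. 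The core estimate I then need is a lower bound on this information-gain term. This is exactly where the classical Russo--Van Roy machinery enters: one shows that $\balpha^T\bh \geq \frac{2}{K}(\bdelta^T\balpha)^2$ — equivalently that the information ratio of Thompson sampling with bandit feedback is at most $K/2$ — and separately that $\mathbf{1}^T\bh = \sum_{a}\bh(a) = \sum_a I(A^*;Y_{t,a}) \geq \frac{2}{1}(\bdelta^T\balpha)^2$ or something comparable; more carefully, $\sum_a\bh(a)\geq 2(\bdelta^T\balpha)^2$ should follow from the same single-action information-ratio bound applied to the full-information / "play the posterior-mean-best action and observe everything" surrogate. (The cleanest route: the bound $\balpha^T\bh\geq \frac{2}{K}(\bdelta^T\balpha)^2$ is essentially Proposition 3 / Lemma-type result behind Theorem~\ref{thm:IDSLP}, and $\sum_a\bh(a)\geq 2(\bdelta^T\balpha)^2$ is the $K=1$-style bound obtained by noting the full observation vector is at least as informative as any single coordinate, or directly from the fact that observing all actions reveals $A^*$'s mean gaps.)

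Given these two ingredients, I would write $(\bG_t\bh_t)^T\balpha \geq (1-r_t)\cdot\frac{2}{K}(\bdelta^T\balpha)^2 + r_t\cdot 2(\bdelta^T\balpha)^2 = 2(\bdelta^T\balpha)^2\cdot\frac{(1-r_t) + Kr_t}{K} = \frac{2(Kr_t+1-r_t)}{K}(\bdelta^T\balpha)^2$, so that
\[
\psi_t = \frac{(\bdelta^T\balpha)^2}{(\bG_t\bh_t)^T\balpha} \leq \frac{K}{2(Kr_t+1-r_t)},
\]
which is precisely the claimed per-step bound; summing under the square root via Lemma~\ref{prop:generalbound} finishes the proof. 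The main obstacle is establishing the lower bound $\sum_{a\in\cK}\bh(a)\geq 2(\bdelta^T\balpha)^2$ with the right constant — i.e., that simultaneously observing all $K$ coordinates carries at least the information needed to "pay" for the squared expected regret with information ratio $1/2$ rather than $K/2$. I expect this follows from the same variance-decomposition / Pinsker argument used in Russo--Van Roy's information ratio bounds (decomposing $I(A^*;Y_{t,a})$ over the posterior on $A^*$ and bounding each KL term below by a squared mean-gap via Pinsker, then summing), but it needs to be done carefully to land the factor $2$ rather than $2/K$; everything else is bookkeeping with the rank-one-plus-identity structure of $\bG_t$.
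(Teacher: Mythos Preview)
Your proposal is correct and follows essentially the same route as the paper's proof: decompose $(\bG_t\bh_t)^T\balpha_t=(1-r_t)\,\balpha_t^T\bh_t+r_t\,\mathbf{1}^T\bh_t$, invoke the bandit-feedback bound $\balpha_t^T\bh_t\geq \tfrac{2}{K}(\bdelta_t^T\balpha_t)^2$ and the full-information bound $\mathbf{1}^T\bh_t\geq 2(\bdelta_t^T\balpha_t)^2$, and plug into Lemma~\ref{prop:generalbound} via Proposition~\ref{prop:ratiobound}. The step you flag as the ``main obstacle'' is exactly the paper's Lemma~\ref{lem:fullinfo}, proved by applying Pinsker's inequality termwise to $\mathbb{E}[Y_{t,a}\mid A^*=a]-\mathbb{E}[Y_{t,a}]$, then Jensen over the posterior weights $\balpha_t(a)$, and finally enlarging the sum over $a^*$ --- so the constant $\tfrac{1}{2}$ (rather than $\tfrac{K}{2}$) falls out precisely because no Cauchy--Schwarz step over $\cK$ is needed.
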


As far as we know, this is the first result for stochastic bandit with random graph feedback. An analogous result has been shown for the non-stochastic bandit, for which Koc{\'a}k et al. \cite{kocak2016online} proposed Exp3-Res\footnote{Note that they assume that $r_t$ is not available to Exp3-Res. However, it is still reasonable to compare since TS-N is not aware of $r_t$ as well.} policy with guarantee of $O\left(\sqrt{\sum_{t=1}^T\frac{1}{r_t}\log K}\right)$ if $r_t\geq\frac{\log T}{2K-2}$ holds for all $t$. Theorem~\ref{thm:TSNrandom} recovers the same guarantee without restriction on $r_t$ since $H(\balpha_1)\leq \log K$. We have the following result when $r_t$ is time-invariant.


\begin{corollary}\label{cor:TSNrandom}
For any time-invariant and random graph feedback $r$ (i.e., $r_t=r$ $\forall t\in\mathbb{N}$), the Bayesian regrets of TS-N, IDS-N and IDSN-LP are upper-bounded by
\begin{equation}
\sqrt{\frac{K}{2(Kr+1-r)}TH(\balpha_1)}.
\end{equation}
\end{corollary}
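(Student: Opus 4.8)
The plan is to derive Corollary~\ref{cor:TSNrandom} as the immediate specialization of Theorem~\ref{thm:TSNrandom} to the time-invariant case. First I would note that Theorem~\ref{thm:TSNrandom} already gives, for an arbitrary sequence $(r_1,r_2,\ldots)$, the bound $\sqrt{\sum_{t=1}^T \frac{K}{2(Kr_t+1-r_t)}H(\balpha_1)}$ on the Bayesian regret of each of TS-N, IDS-N, and IDSN-LP. Setting $r_t = r$ for all $t$, every summand in $\sum_{t=1}^T \frac{K}{2(Kr_t+1-r_t)}H(\balpha_1)$ becomes the constant $\frac{K}{2(Kr+1-r)}H(\balpha_1)$, so the sum collapses to $T \cdot \frac{K}{2(Kr+1-r)}H(\balpha_1)$. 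Taking the square root yields exactly $\sqrt{\frac{K}{2(Kr+1-r)}TH(\balpha_1)}$, which is the claimed bound.

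Slightly more carefully, to make the argument self-contained I would re-derive the per-step ingredient: by Proposition~\ref{prop:ratiobound}, the information ratio of each of the three policies is bounded by $\psi_t = \frac{(\bdelta_t^T\balpha_t)^2}{(\bG_t\bh_t)^T\balpha_t}$, and in the random graph model $\bG_t$ has diagonal entries $1$ and off-diagonal entries $r$, so $(\bG_t\bh_t)(i) = \bh_t(i) + r\sum_{j\neq i}\bh_t(j) = (1-r)\bh_t(i) + r\sum_{j}\bh_t(j)$. The bound on $\psi_t$ by $\frac{K}{2(Kr+1-r)}$ is precisely the computation carried out in the proof of Theorem~\ref{thm:TSNrandom} (Appendix~\ref{proof:TSNrandom}); with $r_t \equiv r$ this bound is uniform in $t$. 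Plugging the uniform bound $\mathbb{E}_{\bpi}[\Psi_t(\bpi_t)] \le \frac{K}{2(Kr+1-r)}$ into Lemma~\ref{prop:generalbound} gives $\mathbb{E}[R(T,\bpi)] \le \sqrt{\sum_{t=1}^T \frac{K}{2(Kr+1-r)}H(\balpha_1)} = \sqrt{\frac{K}{2(Kr+1-r)}TH(\balpha_1)}$.

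There is essentially no obstacle here: the corollary is a one-line consequence of a theorem whose proof has already done all the work, and the only ``step'' is observing that a sum of $T$ equal terms equals $T$ times one term. If anything merits a sentence of care, it is simply confirming that the hypothesis $r_t = r$ for all $t\in\mathbb{N}$ is exactly the instantiation Theorem~\ref{thm:TSNrandom} was stated to cover, so that no side conditions (such as a lower bound on $r_t$) are needed — indeed Theorem~\ref{thm:TSNrandom} imposes no restriction on the $r_t$, so the corollary holds for every $r\in[0,1]$.
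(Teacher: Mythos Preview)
Your proposal is correct and matches the paper's approach: the corollary is stated without a separate proof because it is exactly the specialization of Theorem~\ref{thm:TSNrandom} to $r_t\equiv r$, which your first paragraph carries out. The additional re-derivation via Proposition~\ref{prop:ratiobound} and Lemma~\ref{prop:generalbound} is more detail than the paper supplies but is consistent with the argument in Appendix~\ref{proof:TSNrandom}.
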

Corollary \ref{cor:TSNrandom} shows that the benefit from side observations can be measured by the expected number of observations per time step, i.e., $(K-1)r+1$. In other words, the above regret upper bound scales with the ratio of the number of actions and the expected number of observations. When $r=1$, this ratio equals to 1, which yields the regret result for stochastic bandit with full information~\cite{russo2016information}. When $r=0$, this ratio equals to $K$, which yields the regret result for stochastic bandit with bandit feedback~\cite{russo2016information}. An analogous result can be found as Corollary 3 in \cite{alon2013bandits} for the non-stochastic bandits.

{\blue
\section{Computation}\label{sec:com}
In this section, we provide computational methods for updating statistics and discuss the complexity issues of the algorithms.

\subsection{Computational Methods for Updating Statistics}
Algorithm \ref{alg:IDS} offers an abstract design principle with the availability of the statistics (i.e., $\balpha_t$, $\bh_t$ and $\bdelta_t$). However, additional work is required to design efficient computational methods to update these statistics for specific problems. In general, the challenge of updating statistics is to compute and represent a posterior distribution given observations, which is also faced with Thompson Sampling. When the posterior distribution is complex, one can often generate samples from this distribution using Markov Chain Monte Carlo (MCMC) algorithms, enabling efficient implementation of IDS. A detailed discussion of applying MCMC methods for implementing randomized policy can be found in \cite{scott2010modern}. However, when the posterior distribution has a closed form or the conjugate prior is well studied, the posterior distributions can be efficiently computed and stored, as are the cases of Beta-Bernoulli bandits and Gaussian bandits~\cite{wu2015online}.

In the numerical experiment, we implement Algorithm 2 in \cite{russo2014learning} to represent the posterior distribution and compute the statistics\footnote{Note that the vector $\bg$ calculated in \cite{russo2014learning} is the vector $\bh$ in stochastic bandits with graph feedback.} for Beta-Bernoulli bandits. The key idea is that the Beta distribution is a conjugate prior for the Bernoulli distribution. Specifically, given the prior that the expectation $\theta_i$ is drawn from Beta$(\beta_i^1,\beta_i^2)$, the posterior distribution of observing $Y_i\sim$ Bernoulli$(\theta_i)$ is Beta$(\beta_i^1+Y_i,\beta_i^2+1-Y_i)$. So the posterior distribution can be updated and represented easily. Then what remains is to calculate the statistics $\balpha_t$, $\bh_t$ and $\bdelta_t$ given the posterior distributions. More details of the calculations can be found in~\cite{russo2014learning}. As stated in \cite{russo2014learning}, practical implementation of updating statistics involves integrals, which can be evaluated at a discrete grid of points within interval $[0,1]$. The computational cost of updating statistics is $O(K^2n)$ where $n$ is the number of points used in the discretization of $[0,1]$.
\subsection{Complexity of Optimization Problems involved in IDS based Policies}
The following result shows that problem $P_1$ is a convex optimization problem and has a structure in the optimal solution.
\begin{proposition}\label{prop:complexityP1}
The function $\Psi_t:\bpi_t\rightarrow (\bpi_t^T\bdelta_t)^2/(\bpi_t^T\bG_t\bh_t)$ is convex on $\{\bpi_t\in\mathcal{S}|\bpi_t^T\bG_t\bh_t>0\}$. Moreover, there is an optimal solution $\bpi_t^*$ to problem $P_1$ such that $|\{i:\bpi_t^*(i)>0\}|\leq 2$.
\end{proposition}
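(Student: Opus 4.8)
The plan is to establish the two claims of Proposition~\ref{prop:complexityP1} in order: first the convexity of $\Psi_t$ on the stated domain, then the existence of a $2$-sparse optimizer for $P_1$.

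For convexity, I would use the standard fact that a function of the form $f(\bpi)=\frac{(\bpi^T\bdelta_t)^2}{\bpi^T\bG_t\bh_t}$ is a quadratic-over-linear function, which is jointly convex whenever the linear denominator is positive. Concretely, writing $u(\bpi)=\bpi^T\bdelta_t$ and $v(\bpi)=\bpi^T\bG_t\bh_t$, both affine in $\bpi$, the function $(u,v)\mapsto u^2/v$ is convex on $\{v>0\}$ (its Hessian is positive semidefinite; this is a textbook example, e.g.\ Boyd--Vandenberghe). Since convexity is preserved under composition with the affine map $\bpi\mapsto(u(\bpi),v(\bpi))$, $\Psi_t$ is convex on $\{\bpi_t\in\mathcal{S}\mid \bpi_t^T\bG_t\bh_t>0\}$. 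I would include the short Hessian computation for $u^2/v$ to make the argument self-contained.

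For the sparsity claim, I would exploit a scaling/support argument. Fix any optimal $\bpi_t^*$ to $P_1$ with support set $I=\{i:\bpi_t^*(i)>0\}$; note $\bpi_t^{*T}\bG_t\bh_t>0$ at any optimum (otherwise the objective is $+\infty$ or, in the degenerate all-zero-information case, any $\bpi$ is optimal and a single point works). The idea is that $\Psi_t$ restricted to the relative interior of the face $\{\bpi\in\mathcal{S}: \mathrm{supp}(\bpi)\subseteq I\}$ is convex, so it attains its minimum over that face either at a vertex (a point mass, $|I|=1$) or, if the minimum is in the relative interior, then along any line through the optimum within the face the objective is constant — and one can then push to the boundary of the face, strictly reducing the support, without increasing $\Psi_t$, and iterate. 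Alternatively, and perhaps cleaner, I would mimic the argument in~\cite{russo2014learning}: show that it suffices to optimize over mixtures of at most two deterministic actions by observing that for any $\bpi$ there exist indices $i,j$ and $q\in[0,1]$ such that the two-point distribution $q e_i+(1-q)e_j$ has information ratio no larger than $\Psi_t(\bpi)$; this follows because $\Psi_t(\bpi)$ is a weighted-type average that is dominated by its behavior on extreme pairs. Either route gives an optimal $\bpi_t^*$ with $|\{i:\bpi_t^*(i)>0\}|\le 2$.

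The main obstacle is the sparsity step: the face-reduction argument must be carried out carefully because $\Psi_t$ is a ratio, not a linear or separable function, so "moving to the boundary of a face keeps the objective constant" is not immediate from linearity — one needs that $\Psi_t$ is constant along any segment inside a face on which it is minimized, which is exactly where convexity (from the first part) together with the fact that a convex function constant-valued at an interior minimizer of a segment is constant on that segment does the work. I would therefore state that observation as a small lemma and invoke it. The convexity part I expect to be entirely routine. Full details are deferred to the appendix.
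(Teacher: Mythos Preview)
Your convexity argument and your second route for sparsity (adapt Russo--Van Roy with $\bg_t$ replaced by $\bG_t\bh_t$) are exactly what the paper does: its entire proof is the one-line remark that Proposition~1 of \cite{russo2014learning} goes through verbatim after this substitution. So that part is fine and on target.

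Your first route for sparsity, however, has a genuine gap. The claim ``if the minimum is in the relative interior, then along any line through the optimum within the face the objective is constant'' is false for convex functions in general (e.g.\ $x\mapsto x^2$ on $[-1,1]$ has an interior minimizer but is not constant on any segment through it), and convexity alone cannot give it to you. What makes the support-$2$ reduction work is the \emph{specific} structure of $\Psi_t$: along a direction $d$ with $\mathbf{1}^Td=0$ and $\bdelta_t^Td=0$ (which exists whenever the support has size $\ge 3$), the numerator $(\bpi^T\bdelta_t)^2$ is constant and the denominator $\bpi^T\bG_t\bh_t$ is affine, so $\Psi_t$ is monotone along that line and you can push to the boundary without increase. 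Equivalently, fix $c=(\bpi^*)^T\bG_t\bh_t$ and solve the LP $\min_{\bpi\in\mathcal S}\bpi^T\bdelta_t$ subject to $\bpi^T\bG_t\bh_t\ge c$; a basic optimal solution has at most two positive coordinates and is also optimal for $P_1$. Either of these is the content of the Russo--Van Roy argument you cite; your face-reduction sketch would need one of them, not bare convexity, to go through.
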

The proof is an adaption of the proof of Proposition 1 in \cite{russo2014learning} by replacing $\bg_t$ by $\bG_t\bh_t$. Proposition \ref{prop:complexityP1} shows that problem $P_1$ is a convex optimization problem, which can be solved by a standard convex optimization solver. What's more, there exists an optimal solution with support size of at most 2. One can search all the pairs of actions and find the optimal solution by brute force. For each pair, it remains to solve a convex optimization problem with one parameter by closed form. So the computational complexity is $O(K^2)$.

Problems $P_2$ and $P_3$ are linear programming problems, which can be solved efficiently in polynomial time by standard methods. Moreover, the following result shows that they can be solved much faster. The proof is presented in Appendix \ref{proof:complexityLP}
\begin{proposition}\label{prop:complexityLP}
The optimization problems $P_2$ and $P_3$ can be solved in $O(K)$ iterations.
\end{proposition}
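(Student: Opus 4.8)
The plan is to recognise $P_2$ and $P_3$ as two instances of a single linear program that, after dualising, becomes a linear program in the plane, and then to invoke a linear-time (hence $O(K)$-iteration) algorithm for low-dimensional linear programming. Write $w_t\triangleq\bG_t\bh_t\in\mathbb{R}_{\ge 0}^K$ and let $b$ denote the common right-hand side: $b=\balpha_t^T\bG_t\bh_t$ for $P_2$ and $b=\balpha_t^T\bh_t$ for $P_3$. Since $\bG_t(i,i)=1$ and $\bh_t\ge 0$ we have $w_t\ge\bh_t$ coordinatewise, so in either case $b\le\max_i w_t(i)$ and hence $e_{j^\star}$ with $j^\star\in\arg\max_i w_t(i)$ is feasible; as the objective $\bdelta_t^T\bpi_t$ is bounded on $\mathcal{S}$, both programs are solvable instances of the common form $\min\{\bdelta_t^T\bpi_t:\ \mathbf{1}^T\bpi_t=1,\ w_t^T\bpi_t\ge b,\ \bpi_t\ge 0\}$.

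First I would take the linear-programming dual, assigning a free multiplier $\lambda$ to $\mathbf{1}^T\bpi_t=1$ and a multiplier $\mu\ge 0$ to $w_t^T\bpi_t\ge b$. The dual is $\max\{\lambda+\mu b:\ \lambda+\mu\,w_t(i)\le\bdelta_t(i)\ \text{for all }i\in\cK,\ \mu\ge 0\}$, i.e.\ a linear program in the two variables $(\lambda,\mu)$ with $K+1$ halfplane constraints. Equivalently, this is the maximisation over $\mu\ge 0$ of the concave piecewise-linear function $\mu\mapsto\mu b+\min_i\big(\bdelta_t(i)-\mu\,w_t(i)\big)$, which has at most $K$ breakpoints.

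Next I would solve this planar linear program. A linear program in fixed dimension $d$ with $n$ constraints can be solved in $O(n)$ iterations — for instance by Seidel's randomised incremental method, which inserts the constraints one at a time, or by Megiddo-style prune-and-search; applied with $d=2$ and $n=K+1$ this produces the dual optimum $(\lambda^\star,\mu^\star)$ in $O(K)$ iterations. (Alternatively, the maximum of the Lagrangian can be located by prune-and-search over its $\le K$ breakpoints using linear-time median selection.) Finally I would recover a primal optimiser: by complementary slackness the support of an optimal $\bpi_t$ lies among the indices $i$ with $\lambda^\star+\mu^\star w_t(i)=\bdelta_t(i)$, and — consistently with the support-size-$\le 2$ structure of Proposition~\ref{prop:complexityP1} — choosing at most two such indices and solving the $2\times 2$ system given by $\mathbf{1}^T\bpi_t=1$ and $w_t^T\bpi_t=b$ (or taking $\bpi_t=e_{i^\star}$ when $\mu^\star=0$) yields $\bpi_t^{\text{IDSN-LP}}$ resp.\ $\bpi_t^{\text{IDS-LP}}$; this postprocessing is $O(K)$.

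The step I expect to be the crux is obtaining linear rather than quadratic complexity. The support-size-$\le 2$ property invites enumerating the $\binom{K}{2}$ candidate pairs and solving a one-dimensional problem for each, which only gives $O(K^2)$; beating this is exactly what the low-dimensional linear-programming machinery (or the median-based prune-and-search on the Lagrangian's breakpoints) buys. The remainder is routine: verifying feasibility and boundedness as above so that strong duality applies, and handling degeneracies such as ties in $w_t$ or $\bdelta_t$, indices with $w_t(i)=b$, and the case $\mu^\star=0$ — none of which affects the $O(K)$ count.
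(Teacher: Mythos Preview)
Your argument is correct and takes a genuinely different route from the paper. The paper argues that the primal feasible region---the simplex $\mathcal{S}$ intersected with one half-space---has at most $K+1$ vertices, and then proposes to visit them all. That vertex count is not justified: a vertex at which the added constraint $w_t^T\bpi_t=b$ is active has exactly two nonzero coordinates, and any pair $(i,j)$ with $w_t(i)\le b\le w_t(j)$ yields such a vertex, so there can be $\Theta(K^2)$ of them (already for $K=4$, $w_t=(1,2,3,4)$, $b=2.5$ one obtains six vertices, not five). Your approach avoids this pitfall by passing to the dual, a linear program in the two variables $(\lambda,\mu)$ with $K+1$ half-plane constraints; the $O(K)$ bound then follows from classical linear-time algorithms for fixed-dimensional LP (Megiddo, Seidel), and primal recovery via complementary slackness costs another $O(K)$. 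What your dualisation buys, in effect, is a reduction from the $\Theta(K^2)$ candidate primal vertices to the $O(K)$ breakpoints of the concave envelope $\mu\mapsto\min_i\bigl(\bdelta_t(i)-\mu\,w_t(i)\bigr)$, which is exactly the structure the paper's shortcut needed but did not articulate.
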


In sum, the computational complexity of the proposed IDS based policies (including TS-N) is $O(K^2n)$ per iteration, where $n$ is the number of points used in the discretization of $[0,1]$. Note that the complexity of UCB based policies is $O(K)$. IDS based policies can improve the regret performance with reasonable computation cost.
}
\section{Numerical Results}\label{sec:simul}
This section presents numerical results from experiments that evaluate the effectiveness of IDS based policies in comparison to alternative algorithms. We consider the classical Beta-Bernoulli bandit problem with independent actions. The reward of each action $i$ is a Bernoulli$(\theta_i)$ random variable and $\theta_i$ is independently drawn from Beta$(1,1)$. In the experiment, we set $K=5$ and $T=1000$. All the regret results are averaged over $1000$ trials. 

Figure~\ref{fig:determ} presents the cumulative regret results under the deterministic graph feedback. For the time-invariant case, we use a graph with 2 cliques, presented in Appendix \ref{com:graph}. For the time-variant case, the sequence of graphs is generated by the Erd{\H{o}}s-R{\'e}nyi model\footnote{It is different from the Erd{\H{o}}s-R{\'e}nyi random graph feedback since the graph is revealed to the decision maker before the decision making}. We compare our policies to three other algorithms that are proposed for the stochastic bandit with deterministic graph feedback. Caron et al.~\cite{bhagat} proposed UCB-N and UCB-maxN that closely follow the UCB policy and use side observations for better reward estimates (UCB-N) or choose one of the neighboring nodes with a better empirical estimate (UCB-maxN). It has been shown that the regret of UCB-N and UCB-maxN scale with the clique cover number in the time-invariant case. Buccapatnam et al.~\cite{sigmetrics2014} improved the results in \cite{bhagat} with LP-based algorithms ($\epsilon_t$-greedy-LP and UCB-LP\footnote{The result of UCB-LP is omitted from Figure \ref{fig:determ} because it can not be adapted to the time-variant case. Its regret result is similar to that of $\epsilon_t$-greedy-LP in the time-invariant case.}) and guarantees scaling with the domination number\footnote{Domination number is the smallest cardinality of a dominating set, such that any node not in this set is adjacent to at least a member of this set.} in the time-invariant case. We find that TS-N policy outperforms these three algorithms, which is consistent with the empirical observation in the bandit feedback setting~\cite{chapelle2011empirical}. In addition, IDS-N, IDSN-LP and IDS-LP outperform TS-N policy in both cases. These improvements stem from the exploitation of graph structure in IDS based policies, which raises an open question of determining better regret bounds for our IDS based policies in terms of graph structure.

Figure~\ref{fig:random} presents the cumulative regret results under the Erd{\H{o}}s-R{\'e}nyi random graph feedback. For the time-invariant case, we fix the parameter $r=0.25$. For the time-variant case, the parameter $r_t$ is independently drawn from the uniform distribution over the interval $[0,1]$. We compare our policies to UCB-N\footnote{UCB-N is unaware of the graph structure. So it works under the random graph feedback while UCB-maxN and $\epsilon_t$-greedy-LP do not.} and two other algorithms (Exp3-SET of \cite{alon2013bandits} and Exp3-Res) designed for the non-stochastic bandit with random graph feedback. The average regrets of Exp3-SET and Exp3-Res are dramatically larger than that of IDS based policies. For this reason, parts of Exp3-SET and Exp3-Res are omitted from Figure~\ref{fig:random}. Although, Exp3-SET and Exp3-Res have similar problem-independent upper bounds of regret, our policies utilize the stochastic model and outperform these counterparts. In addition, our IDS based policies outperform TS-N and UCB-N as well. It is interesting that IDS-LP policy performs well in both experiments though it has an upper bound that scales with the number of actions. The reason is that IDS-LP is greedy in minimizing the expected instantaneous regret, however, with guaranteed extent of exploration.

\begin{figure}[t]
\centering
\begin{subfigure}[b]{0.45\textwidth}
    \includegraphics[width=\textwidth]{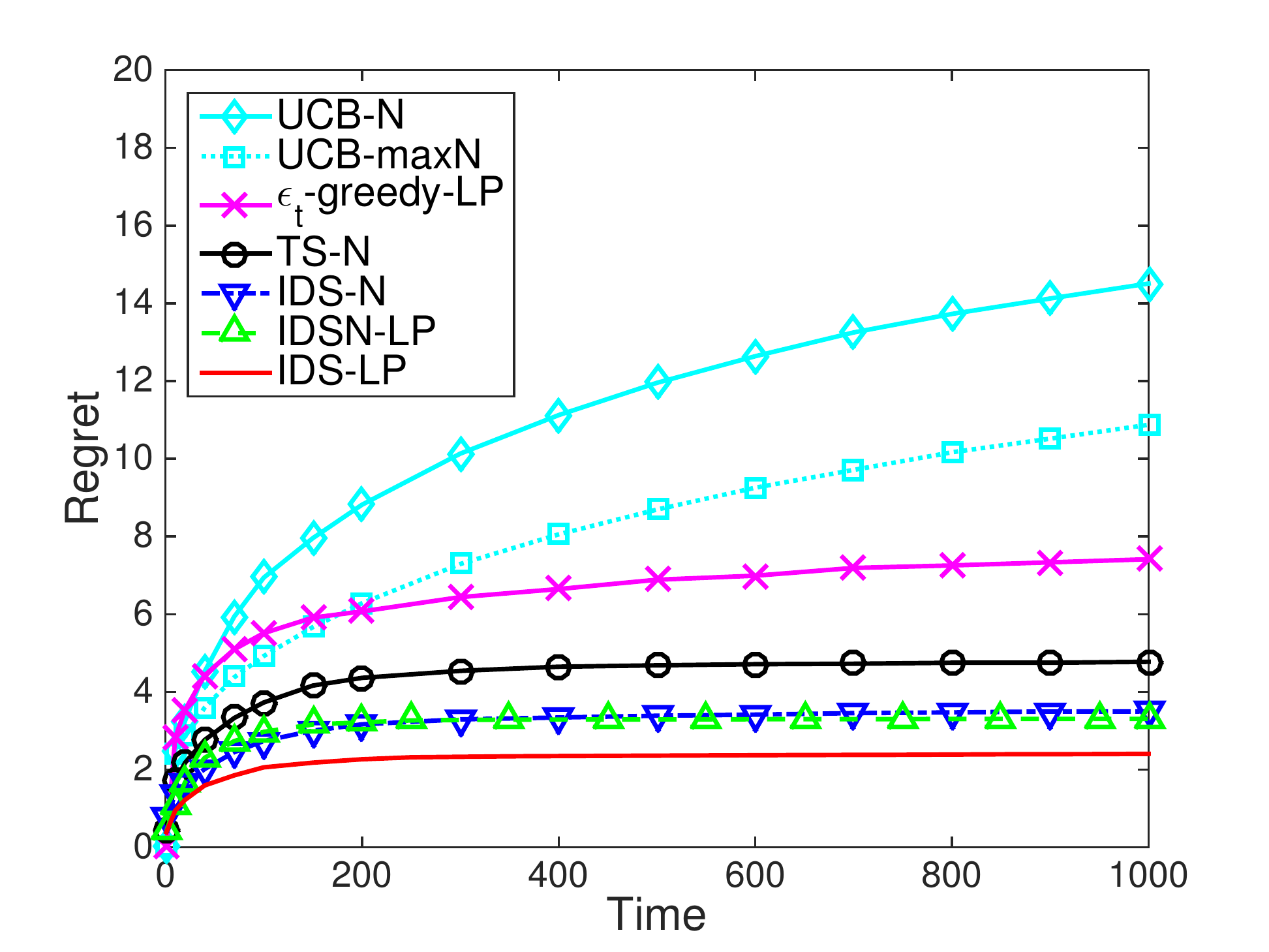}
     \caption{Time-invariant graphs}\label{subfig:determinvar}
\end{subfigure}
\begin{subfigure}[b]{0.45\textwidth}
    \includegraphics[width=\textwidth]{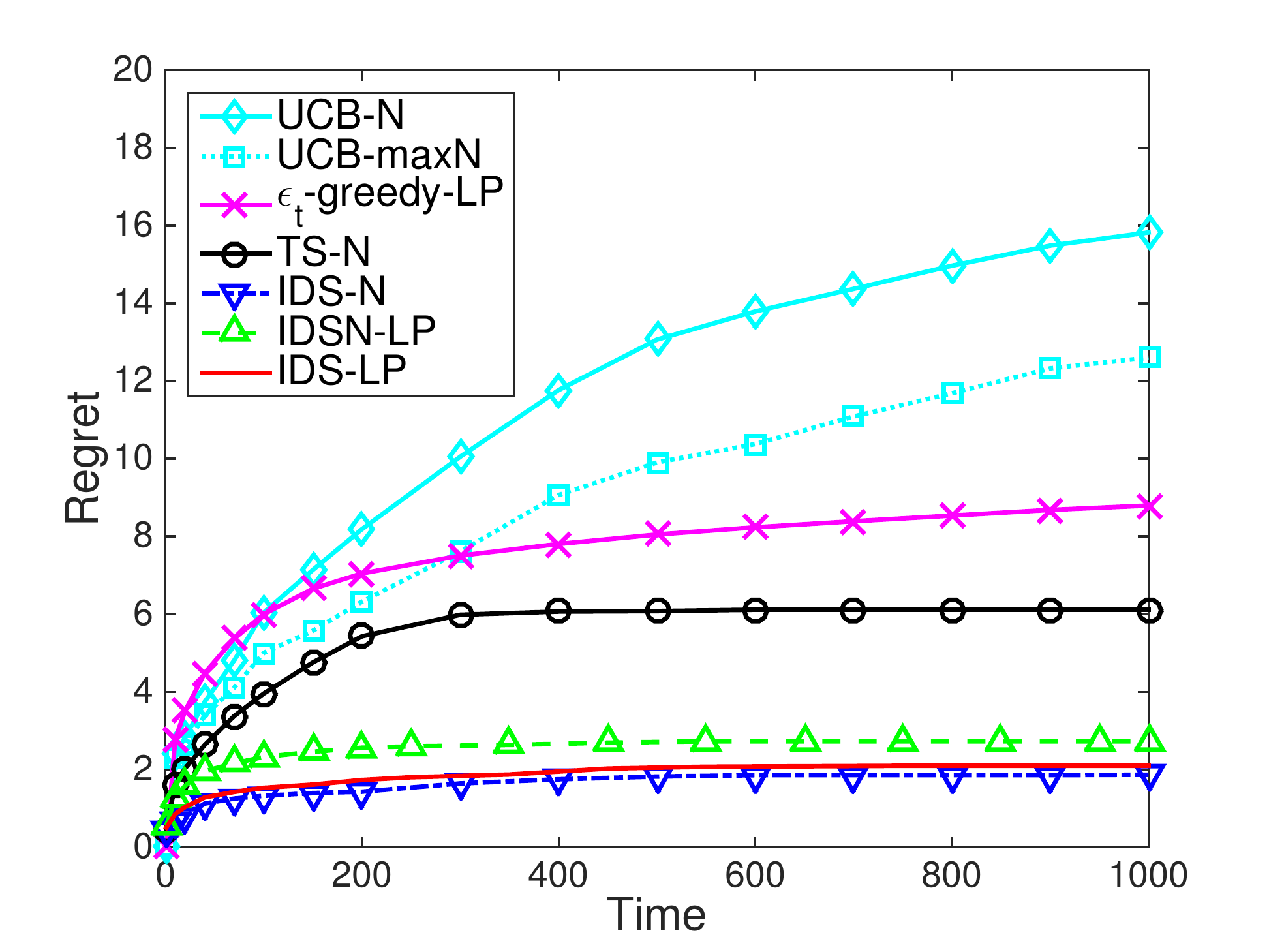}
     \caption{Time-variant graphs}\label{subfig:determvar}
\end{subfigure}
\caption{Regrets under the deterministic graph feedback}
\label{fig:determ}
\end{figure}

\begin{figure}[t]
\centering
\begin{subfigure}[b]{0.45\textwidth}
    \includegraphics[width=\textwidth]{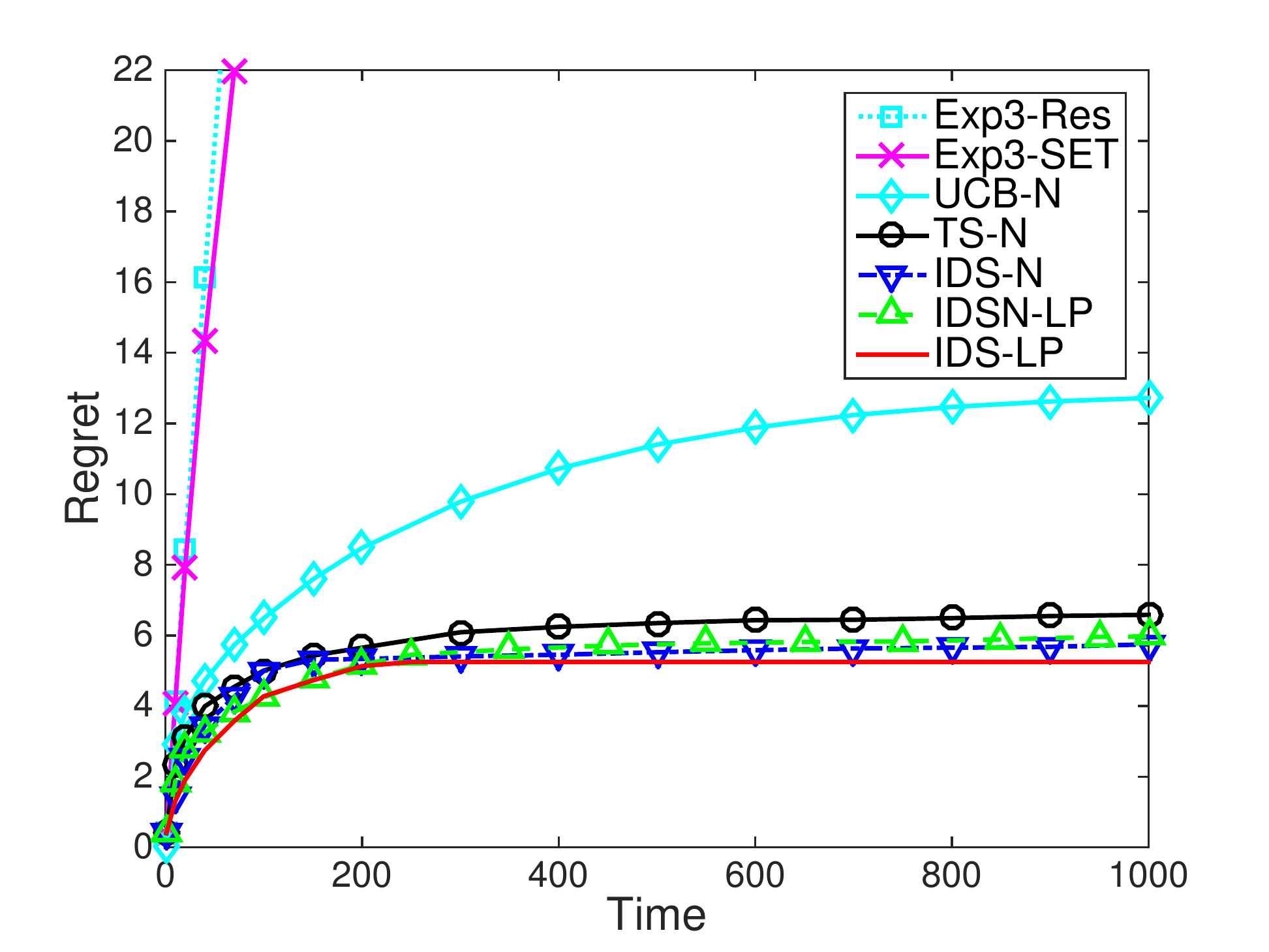}
     \caption{Time-invariant $r=0.25$}\label{subfig:randinvar}
\end{subfigure}
\begin{subfigure}[b]{0.45\textwidth}
    \includegraphics[width=\textwidth]{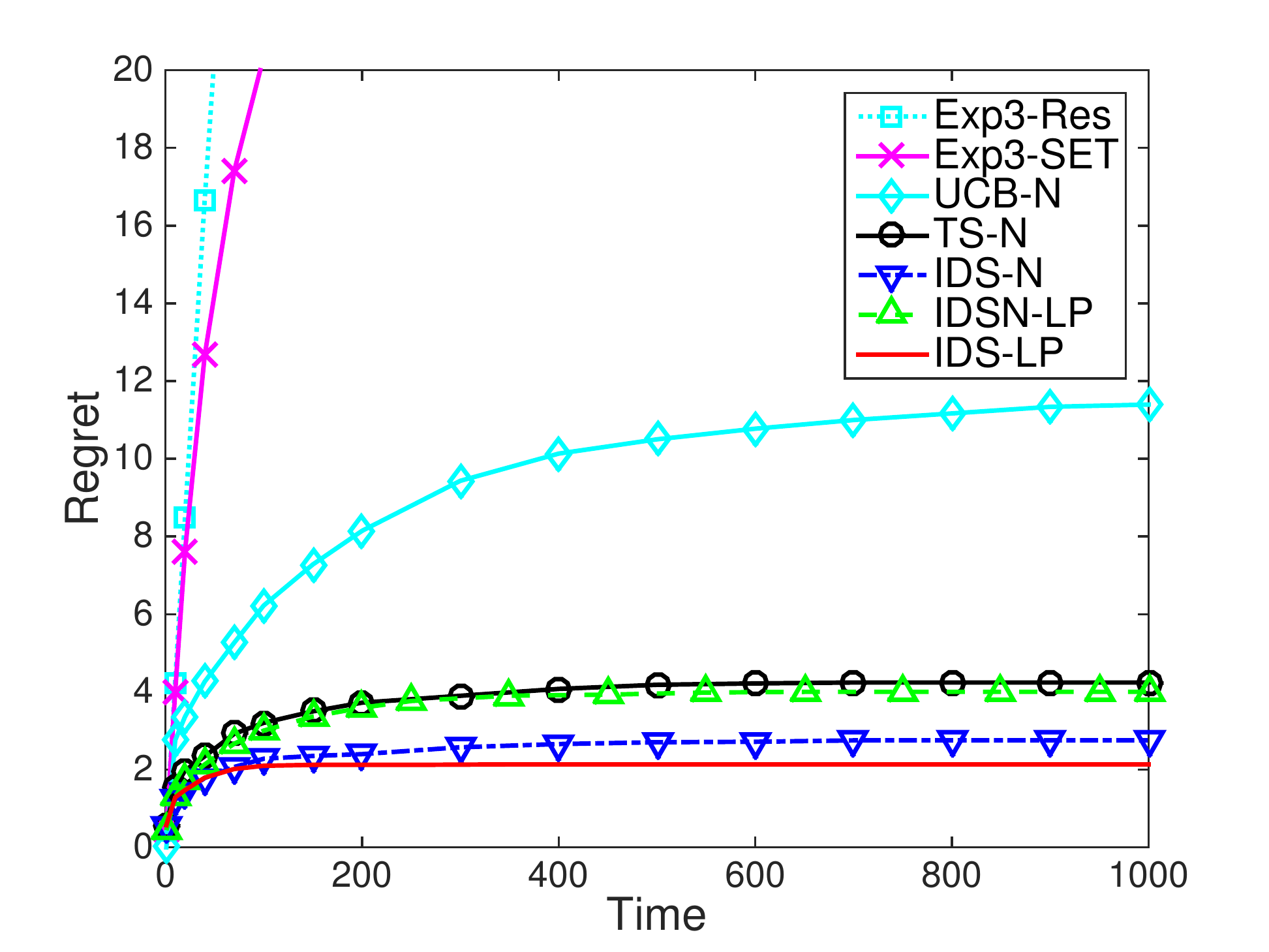}
     \caption{Time-variant $r_t$}\label{subfig:randvar}
\end{subfigure}
\caption{Regrets under the Erd{\H{o}}s-R{\'e}nyi random graph feedback}
\label{fig:random}
\end{figure}
\section{Conclusion}
We have proposed Information Directed Sampling based policies and presented Thompson Sampling for stochastic multi-armed bandits with both deterministic and random graph feedback. We establish a unified Bayesian regret bound, that scales with the clique cover number of the graph, for TS-N, IDS-N and IDSN-LP policies under the deterministic graph case. We also present the first known theoretical guarantee, that scales with the ratio of number of actions over the expected number of observations per iteration, for TS-N, IDS-N and IDSN-LP policies under the random graph case. These results allow us to uncover the gain of partial feedback between the bandit feedback and full information feedback. Finally, we demonstrate state of art performance in numerical experiments.

This work raises the following open questions. It would be interesting to find a problem-independent regret bound that scales with the independence number of the graph instead of the clique cover number for IDS-N and IDSN-LP policies under the deterministic graph case. We believe that such improvement against TS-N can be established by exploiting the graph structure in IDS-N and IDSN-LP policies, as shown in Figure \ref{fig:determ}. Another interesting problem is to find a tighter bound for IDS-LP policy. Intuitively, IDS-LP policy can have low regret due to its greedy nature. Further, it would be an interesting extension to our work to consider a preferential attachment random graph and other growing graphs with time to model the growth process in social networks with time. 
\newpage
\bibliographystyle{aaai}
\bibliography{refs,ref}
\newpage
\onecolumn
\appendix
\section{Proof}
We first present two facts and prove three useful lemmas. Then we provide the proofs of previous results. The following fact is well known in information theory and is shown as Fact 6 in~\cite{russo2016information}.
\begin{fact}\label{fact:mutualinfo}
\emph{(KL divergence form of mutual information)} For a discrete random variable $A$ over finite set $\cK$ and random variable $Y$, the mutual information
\begin{align}
I(A;Y)&\triangleq D(\mathbb{P}((A,Y)\in\cdot)||\mathbb{P}(A\in\cdot)\mathbb{P}(Y\in\cdot))\\
&=\mathbb{E}_{A}[D(\mathbb{P}(Y\in\cdot|A)||\mathbb{P}(Y\in\cdot))]\\
&=\sum_{a\in\cK}\mathbb{P}(A=a)D(\mathbb{P}(Y\in\cdot|A=a)||\mathbb{P}(Y\in\cdot)).
\end{align}
\end{fact}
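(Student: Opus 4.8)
The plan is to unwind the definition of $I(A;Y)$ and, since $A$ is discrete, to disintegrate the joint law of $(A,Y)$ over the values of $A$. Write $\mathbb{P}_{A,Y}$, $\mathbb{P}_A$, $\mathbb{P}_Y$ for the laws of $(A,Y)$, $A$, $Y$, and for each $a\in\cK$ with $\mathbb{P}(A=a)>0$ set $\mathbb{P}_{Y|a}\triangleq\mathbb{P}(Y\in\cdot\,|\,A=a)$; because $A$ takes only finitely many values these conditional laws are elementary and no regular-conditional-probability machinery is needed. In this notation the first displayed equality is simply the definition of $I(A;Y)$ recalled in the Problem Formulation, and the third is the expansion $\mathbb{E}_A[f(A)]=\sum_{a\in\cK}\mathbb{P}(A=a)f(a)$ of an expectation over a discrete variable; so the only nontrivial claim is the middle identity $D(\mathbb{P}_{A,Y}\,\|\,\mathbb{P}_A\otimes\mathbb{P}_Y)=\mathbb{E}_A\bigl[D(\mathbb{P}_{Y|A}\,\|\,\mathbb{P}_Y)\bigr]$.

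First I would note that absolute continuity is automatic here: for any $a$ with $\mathbb{P}(A=a)>0$ we have $\mathbb{P}_Y=\sum_{a'\in\cK}\mathbb{P}(A=a')\,\mathbb{P}_{Y|a'}\ge\mathbb{P}(A=a)\,\mathbb{P}_{Y|a}$, hence $\mathbb{P}_{Y|a}\ll\mathbb{P}_Y$, and consequently $\mathbb{P}_{A,Y}\ll\mathbb{P}_A\otimes\mathbb{P}_Y$ (the values of $a$ with zero probability carry no mass and may be discarded). Writing $\mathbb{P}_{A,Y}(\mathrm{d}a,\mathrm{d}y)=\mathbb{P}_A(\mathrm{d}a)\,\mathbb{P}_{Y|a}(\mathrm{d}y)$ and $(\mathbb{P}_A\otimes\mathbb{P}_Y)(\mathrm{d}a,\mathrm{d}y)=\mathbb{P}_A(\mathrm{d}a)\,\mathbb{P}_Y(\mathrm{d}y)$, the Radon--Nikodym derivative then factorizes: for $\mathbb{P}_A$-almost every $a$,
\[
g(a,y)\triangleq\frac{\mathrm{d}\mathbb{P}_{A,Y}}{\mathrm{d}(\mathbb{P}_A\otimes\mathbb{P}_Y)}(a,y)=\frac{\mathrm{d}\mathbb{P}_{Y|a}}{\mathrm{d}\mathbb{P}_Y}(y).
\]

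Substituting this into the integral form of the KL divergence and using $\mathrm{d}\mathbb{P}_{A,Y}=g\,\mathrm{d}(\mathbb{P}_A\otimes\mathbb{P}_Y)$ to rewrite $\int\log g\,\mathrm{d}\mathbb{P}_{A,Y}=\int g\log g\,\mathrm{d}(\mathbb{P}_A\otimes\mathbb{P}_Y)$, I would then integrate in the order $y$ then $a$:
\[
D(\mathbb{P}_{A,Y}\,\|\,\mathbb{P}_A\otimes\mathbb{P}_Y)=\int g\log g\;\mathrm{d}(\mathbb{P}_A\otimes\mathbb{P}_Y)=\int\mathbb{P}_A(\mathrm{d}a)\int \log\frac{\mathrm{d}\mathbb{P}_{Y|a}}{\mathrm{d}\mathbb{P}_Y}(y)\;\mathbb{P}_{Y|a}(\mathrm{d}y).
\]
The inner integral is exactly $D(\mathbb{P}_{Y|a}\,\|\,\mathbb{P}_Y)$, so the right-hand side equals $\mathbb{E}_A[D(\mathbb{P}_{Y|A}\,\|\,\mathbb{P}_Y)]=\sum_{a\in\cK}\mathbb{P}(A=a)D(\mathbb{P}_{Y|a}\,\|\,\mathbb{P}_Y)$, which is the claimed chain of equalities. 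The step I expect to be the only real obstacle is justifying that interchange of integrals, since $g\log g$ is not of one sign: here I would use the pointwise bound $g\log g\ge-1/e$, so that the negative part is bounded (the base measure being a probability measure), apply Tonelli to the nonnegative function $g\log g+1/e$, and subtract the constant afterwards. Since the statement is standard --- it appears as Fact~6 of \cite{russo2016information} --- I would keep this bookkeeping terse.
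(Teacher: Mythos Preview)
Your argument is correct and complete. The paper itself does not prove this statement: it merely labels it a well-known fact and cites Fact~6 of \cite{russo2016information}. Your write-up therefore supplies strictly more than the paper does --- a full measure-theoretic derivation of the disintegration, including the absolute-continuity check and the Tonelli justification via the bound $g\log g\ge -1/e$ --- where the paper offers only a citation.
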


The following fact, which is Fact 3 in \cite{russo2016information}, shows that the mutual information between $X$ and $Y$ is the expected reduction in the entropy of $X$ due to observing $Y$.
\begin{fact}\label{fact:entropy}
\emph{(Entropy form of mutual information)}
\begin{equation}
I(X;Y)=H(X)-H(X|Y)=H(Y)-H(Y|X)=H(X)+H(Y)-H(X,Y).
\end{equation}
\end{fact}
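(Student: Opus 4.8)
The plan is to reduce the three claimed identities to a single computation starting from the KL-divergence definition of mutual information (already recorded in Fact~\ref{fact:mutualinfo}) together with the chain rule for entropy. First I would establish the chain rule $H(X,Y)=H(Y)+H(X\mid Y)=H(X)+H(Y\mid X)$. Writing the joint mass (or density) as $p(x,y)=p(y)\,p(x\mid y)$, taking logarithms and averaging against $p(x,y)$ gives $-H(X,Y)=-H(Y)-H(X\mid Y)$, because $\sum_{x,y}p(x,y)\log p(y)=\sum_{y}p(y)\log p(y)=-H(Y)$ and $\sum_{x,y}p(x,y)\log p(x\mid y)=\sum_{y}p(y)\sum_{x}p(x\mid y)\log p(x\mid y)=-H(X\mid Y)$; the symmetric identity follows by exchanging the roles of $X$ and $Y$.

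Next, starting from $I(X;Y)=D(\mathbb{P}((X,Y)\in\cdot)\,\|\,\mathbb{P}(X\in\cdot)\mathbb{P}(Y\in\cdot))=\sum_{x,y}p(x,y)\log\frac{p(x,y)}{p(x)p(y)}$, I would split the logarithm into three pieces and recognize each average: $\sum_{x,y}p(x,y)\log p(x,y)=-H(X,Y)$, $\sum_{x,y}p(x,y)\log p(x)=-H(X)$, and $\sum_{x,y}p(x,y)\log p(y)=-H(Y)$. This yields immediately $I(X;Y)=H(X)+H(Y)-H(X,Y)$, the last of the three forms. Substituting the chain rule $H(X,Y)=H(Y)+H(X\mid Y)$ gives $I(X;Y)=H(X)-H(X\mid Y)$, and substituting $H(X,Y)=H(X)+H(Y\mid X)$ gives $I(X;Y)=H(Y)-H(Y\mid X)$, completing the chain of equalities.

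The step I would be most careful about is the measure-theoretic one: in this paper the observation $Y$ may be continuous while $A^{*}$ is discrete, so ``entropy'' must be read as differential entropy where appropriate, and the manipulations presuppose that the joint law is absolutely continuous with respect to the product law (equivalently, that the relevant densities exist) so that the Radon--Nikodym derivative in the definition of $D(\cdot\|\cdot)$ factors as $p(x,y)/(p(x)p(y))$. In the mixed discrete/continuous regime actually needed here, namely $I(A^{*};Y_{t,a})$ with $A^{*}$ ranging over the finite set $\cK$, this causes no difficulty: conditioning on the finitely many values of $A^{*}$ yields well-defined conditional laws of $Y_{t,a}$ on $\cY$, and one can repeat the computation above with a sum over $a\in\cK$ in place of one of the sums (or simply invoke Fact~\ref{fact:mutualinfo} directly). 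Since the identity is classical --- it is Fact~3 in~\cite{russo2016information} --- the cleanest write-up is to cite it, but the two-line derivation sketched above is the self-contained argument I would supply if needed.
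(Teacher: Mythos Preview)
Your derivation is correct, but note that the paper does not actually prove this statement: it is stated as a fact and attributed to Fact~3 in~\cite{russo2016information}, with no argument supplied. Your final paragraph already anticipates this --- citing the identity is exactly what the paper does --- so the self-contained chain-rule-plus-KL computation you sketch goes strictly beyond the paper's treatment.
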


\begin{lemma}\label{lem:bandit}
For any time $t$, we have that $\frac{\left(\bdelta_t^T\balpha_t\right)^2}{\bh_t^T\balpha_t}\leq \frac{K}{2}$ almost surely.
\end{lemma}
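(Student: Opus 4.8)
The plan is to reproduce, in this bandit-feedback setting, the $K/2$ information-ratio bound for Thompson sampling from~\cite{russo2016information}; the whole argument is pointwise in the realization of $\cF_t$, so it yields the almost-sure statement. Write $\mu(a)\triangleq\mathbb{E}[Y_{t,a}\mid\cF_t]$ and $\mu(a,j)\triangleq\mathbb{E}[Y_{t,a}\mid\cF_t,A^*=j]$, and recall $\balpha_t(\cdot)=\mathbb{P}(A^*=\cdot\mid\cF_t)$. First I would rewrite the numerator: by the tower rule $\mathbb{E}[Y_{t,A^*}\mid\cF_t]=\sum_j\balpha_t(j)\mu(j,j)$ and $\mu(a)=\sum_j\balpha_t(j)\mu(a,j)$, and since $\sum_a\balpha_t(a)=1$,
\[
\bdelta_t^T\balpha_t=\sum_{a\in\cK}\balpha_t(a)\big(\mu(a,a)-\mu(a)\big)=\sum_{a\in\cK}\balpha_t(a)\,c_a,\qquad c_a\triangleq\mu(a,a)-\mu(a).
\]

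Next I would lower-bound the denominator $\bh_t^T\balpha_t=\sum_{a\in\cK}\balpha_t(a)\,I_t(A^*;Y_{t,a})$. Fix $a$ and set $Q_a\triangleq\mathbb{P}(Y_{t,a}\in\cdot\mid\cF_t)$ and $P_a^{(j)}\triangleq\mathbb{P}(Y_{t,a}\in\cdot\mid\cF_t,A^*=j)$. By Fact~\ref{fact:mutualinfo}, $I_t(A^*;Y_{t,a})=\sum_{j\in\cK}\balpha_t(j)\,D(P_a^{(j)}\|Q_a)\ge\balpha_t(a)\,D(P_a^{(a)}\|Q_a)$, keeping only the $j=a$ term since every KL summand is nonnegative. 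Because $Y_{t,a}\in[0,1]$, the difference of its means under $P_a^{(a)}$ and $Q_a$ is at most $\|P_a^{(a)}-Q_a\|_{\mathrm{TV}}$, so Pinsker's inequality gives $D(P_a^{(a)}\|Q_a)\ge 2\big(\mathbb{E}_{P_a^{(a)}}[Y_{t,a}]-\mathbb{E}_{Q_a}[Y_{t,a}]\big)^2=2c_a^2$. Hence $I_t(A^*;Y_{t,a})\ge 2\balpha_t(a)c_a^2$, and therefore $\bh_t^T\balpha_t\ge 2\sum_{a\in\cK}\balpha_t(a)^2c_a^2$.

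Finally I would combine the two estimates through Cauchy--Schwarz against the all-ones vector in $\mathbb{R}^K$: $\big(\bdelta_t^T\balpha_t\big)^2=\big(\sum_{a}1\cdot\balpha_t(a)c_a\big)^2\le K\sum_{a\in\cK}\balpha_t(a)^2c_a^2$. Dividing by the denominator bound gives $(\bdelta_t^T\balpha_t)^2/(\bh_t^T\balpha_t)\le K/2$ whenever $\sum_a\balpha_t(a)^2c_a^2>0$; in the remaining case that sum vanishes, and then so does the numerator, so the inequality holds with the convention $0/0=0$.

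The only genuinely non-mechanical choice is the pairing used in the last two steps: lower-bounding the mutual information by the single diagonal term $\balpha_t(a)D(P_a^{(a)}\|Q_a)$ (discarding all cross terms) and then matching it against a $K$-fold Cauchy--Schwarz is exactly what produces the factor $K/2$, so this is also the step one would revisit to chase a sharper constant. The auxiliary inequality $D(P\|Q)\ge 2(\mathbb{E}_P[X]-\mathbb{E}_Q[X])^2$ for $X\in[0,1]$ (Pinsker together with $|\mathbb{E}_P[X]-\mathbb{E}_Q[X]|\le\|P-Q\|_{\mathrm{TV}}$) should be recorded as an elementary fact or cited; everything else is routine bookkeeping.
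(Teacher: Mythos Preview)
Your proof is correct and follows essentially the same approach as the paper's: both rewrite $\bdelta_t^T\balpha_t=\sum_a\balpha_t(a)(\mathbb{E}[Y_{t,a}\mid\cF_t,A^*=a]-\mathbb{E}[Y_{t,a}\mid\cF_t])$, apply Cauchy--Schwarz against the all-ones vector to extract the factor $K$, use Pinsker to convert squared mean-differences into KL divergences, and keep only the diagonal term $j=a$ in the expansion of $I_t(A^*;Y_{t,a})$. The only cosmetic difference is that the paper bounds the numerator upward and then identifies the result with $\tfrac{K}{2}\bh_t^T\balpha_t$, whereas you bound the denominator downward first; the ingredients and the resulting constant are identical.
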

\begin{proof}
By the definition of the instantaneous regret, we have that
\begin{align}
\bdelta_t^T\balpha_t&=\sum_{a\in\cK}\balpha_t(a)\mathbb{E}[Y_{t,A^*}-Y_{t,a}|\mathcal{F}_t]\\
&\overset{(a)}{=}\sum_{a\in\cK}\balpha_t(a)\left(\mathbb{E}[Y_{t,A^*}|\cF_t]-\mathbb{E}[Y_{t,a}|\mathcal{F}_t]\right)\\
&\overset{}{=}\mathbb{E}[Y_{t,A^*}|\cF_t]-\sum_{a\in\cK}\balpha_t(a)\mathbb{E}[Y_{t,a}|\mathcal{F}_t]\\
&\overset{(b)}{=}\sum_{a\in\cK}\balpha_t(a)\left(\mathbb{E}[Y_{t,a}|\cF_t,A^*=a]-\mathbb{E}[Y_{t,a}|\mathcal{F}_t]\right),\label{eqn:instantregret}
\end{align}
where $(a)$ follows from the linearity of expectation, $(b)$ uses the law of total probability.

By the definition of the information gain of observing an action, we have that
\begin{align}
\bh_t^T\balpha_t&=\sum_{a\in\cK}\balpha_t(a)I_t(A^*;Y_{t,a})\\
&\overset{(c)}{=}\sum_{a\in\cK}\balpha_t(a)\left(\sum_{a^*\in\cK}\mathbb{P}(A^*=a^*|\cF_t)D(\mathbb{P}(Y_{t,a}\in\cdot|\cF_t,A^*=a^*)||\mathbb{P}(Y_{t,a}\in\cdot|\cF_t))\right)\\
&=\sum_{a\in\cK}\sum_{a^*\in\cK}\balpha_t(a)\balpha_t(a^*)D(\mathbb{P}(Y_{t,a}\in\cdot|\cF_t,A^*=a^*)||\mathbb{P}(Y_{t,a}\in\cdot|\cF_t)),\label{eqn:instantgain}
\end{align}
where $(c)$ follows from Fact~\ref{fact:mutualinfo}. Now, we are ready to bound the ratio.

\begin{align}
\left(\bdelta_t^T\balpha_t\right)^2&\overset{(d)}{=}\left(\sum_{a\in\cK}\balpha_t(a)\left(\mathbb{E}[Y_{t,a}|\cF_t,A^*=a]-\mathbb{E}[Y_{t,a}|\mathcal{F}_t]\right)\right)^2\\
&\overset{(e)}{\leq} \left(\sum_{a\in\cK} 1^2\right)\left(\sum_{a\in\cK}\balpha_t(a)^2\left(\mathbb{E}[Y_{t,a}|\cF_t,A^*=a]-\mathbb{E}[Y_{t,a}|\mathcal{F}_t]\right)^2\right)\\
&\overset{(f)}{\leq} \frac{K}{2}\sum_{a\in\cK}\balpha_t(a)^2D\left(\mathbb{P}(Y_{t,a}\in\cdot|\cF_t,A^*=a)||\mathbb{P}(Y_{t,a}\in\cdot|\mathcal{F}_t)\right)\\
&\overset{(g)}{\leq} \frac{K}{2}\sum_{a\in\cK}\balpha_t(a)\left(\sum_{a^*\in\cK}\balpha_t(a^*)D(\mathbb{P}(Y_{t,a}\in\cdot|\cF_t,A^*=a^*)||\mathbb{P}(Y_{t,a}\in\cdot|\cF_t))\right)\\
&\overset{(h)}{=} \frac{K}{2}\bh_t^T\balpha_t,
\end{align}
where $(d)$ follows from (\ref{eqn:instantregret}), $(e)$ uses Cauchy-Schwartz inequality, $(f)$ follows from Pinsker's inequality, $(g)$ follows by adding some nonnegative terms and $(h)$ follows from (\ref{eqn:instantgain}).
\end{proof}

\begin{lemma}\label{lem:fullinfo}
For any time $t$, we have that $\frac{\left(\bdelta_t^T\balpha_t\right)^2}{\bh_t^T\boldsymbol{1}}\leq \frac{1}{2}$ almost surely, where $\boldsymbol{1}=(1,\ldots,1)^T\in\mathbb{R}^K$.
\end{lemma}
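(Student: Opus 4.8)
The plan is to mirror the proof of Lemma~\ref{lem:bandit} almost verbatim, changing only the weights used in the Cauchy--Schwarz step: splitting each summand symmetrically as $\sqrt{\balpha_t(a)}\cdot\sqrt{\balpha_t(a)}(\cdots)$ makes the leading factor equal to $\sum_a\balpha_t(a)=1$ instead of $\sum_a 1^2=K$, which is exactly where the factor $K/2$ gets replaced by $1/2$. The fact that the denominator is now $\bh_t^T\boldsymbol 1$, an \emph{unweighted} sum over all actions, rather than the posterior-weighted sum $\bh_t^T\balpha_t$, is what makes this sharpening achievable.

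First I would record the two ingredients already derived inside the proof of Lemma~\ref{lem:bandit}. From (\ref{eqn:instantregret}),
\[
\bdelta_t^T\balpha_t=\sum_{a\in\cK}\balpha_t(a)\bigl(\mathbb{E}[Y_{t,a}\mid\cF_t,A^*=a]-\mathbb{E}[Y_{t,a}\mid\mathcal{F}_t]\bigr),
\]
and, applying Fact~\ref{fact:mutualinfo} to each $I_t(A^*;Y_{t,a})$,
\[
\bh_t^T\boldsymbol 1=\sum_{a\in\cK}\sum_{a^*\in\cK}\balpha_t(a^*)\,D\bigl(\mathbb{P}(Y_{t,a}\in\cdot\mid\cF_t,A^*=a^*)\,\big\|\,\mathbb{P}(Y_{t,a}\in\cdot\mid\cF_t)\bigr).
\]

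Then I would square $\bdelta_t^T\balpha_t$ and apply Cauchy--Schwarz with the balanced split above; since $\sum_a\balpha_t(a)=1$ this gives $(\bdelta_t^T\balpha_t)^2\le\sum_{a\in\cK}\balpha_t(a)\bigl(\mathbb{E}[Y_{t,a}\mid\cF_t,A^*=a]-\mathbb{E}[Y_{t,a}\mid\mathcal{F}_t]\bigr)^2$. Because $Y_{t,a}\in[0,1]$, each difference of conditional means is at most the total variation distance between the two conditional laws of $Y_{t,a}$, so Pinsker's inequality bounds each squared term by $\tfrac12 D\bigl(\mathbb{P}(Y_{t,a}\in\cdot\mid\cF_t,A^*=a)\,\big\|\,\mathbb{P}(Y_{t,a}\in\cdot\mid\cF_t)\bigr)$. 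Finally, discarding from the double sum for $\bh_t^T\boldsymbol 1$ every term with $a^*\neq a$ (all nonnegative KL divergences) leaves $\bh_t^T\boldsymbol 1\ge\sum_{a\in\cK}\balpha_t(a)D\bigl(\mathbb{P}(Y_{t,a}\in\cdot\mid\cF_t,A^*=a)\,\big\|\,\mathbb{P}(Y_{t,a}\in\cdot\mid\cF_t)\bigr)$, and chaining the three bounds yields $(\bdelta_t^T\balpha_t)^2\le\tfrac12\,\bh_t^T\boldsymbol 1$, i.e. the claim.

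I do not expect a genuine obstacle: the computation is routine once the right Cauchy--Schwarz weighting is chosen. The only point needing a little care is matching that weighting ($\sqrt{\balpha_t}$ on both sides) to the structure of $\bh_t^T\boldsymbol 1$, so that the single diagonal term $a^*=a$ already dominates the post-Pinsker bound; any other split either loses the constant or forces a $\balpha_t$-weighted denominator as in Lemma~\ref{lem:bandit}.
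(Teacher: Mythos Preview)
Your proposal is correct and follows essentially the same route as the paper: expand $\bdelta_t^T\balpha_t$ via (\ref{eqn:instantregret}), expand $\bh_t^T\boldsymbol 1$ via Fact~\ref{fact:mutualinfo}, then combine Pinsker with a convexity inequality and retain only the diagonal terms $a^*=a$ of the double sum. The only cosmetic difference is the order of the two inequalities: the paper applies Pinsker termwise and then Jensen (concavity of $\sqrt{\cdot}$), whereas you apply Cauchy--Schwarz/Jensen (convexity of $(\cdot)^2$) and then Pinsker on each squared term; both orderings yield the identical bound $(\bdelta_t^T\balpha_t)^2\le\tfrac12\sum_{a}\balpha_t(a)D\bigl(\mathbb{P}(Y_{t,a}\in\cdot\mid\cF_t,A^*=a)\,\big\|\,\mathbb{P}(Y_{t,a}\in\cdot\mid\cF_t)\bigr)\le\tfrac12\,\bh_t^T\boldsymbol 1$.
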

\begin{proof}
By the definition of the information gain of observing an action, we have that
\begin{align}
\bh_t^T\boldsymbol{1}&=\sum_{a\in\cK}I_t(A^*;Y_{t,a})\\
&\overset{(a)}{=}\sum_{a\in\cK}\left(\sum_{a^*\in\cK}\mathbb{P}(A^*=a^*|\cF_t)D(\mathbb{P}(Y_{t,a}\in\cdot|\cF_t,A^*=a^*)||\mathbb{P}(Y_{t,a}\in\cdot|\cF_t))\right)\\
&=\sum_{a\in\cK}\sum_{a^*\in\cK}\balpha_t(a^*)D(\mathbb{P}(Y_{t,a}\in\cdot|\cF_t,A^*=a^*)||\mathbb{P}(Y_{t,a}\in\cdot|\cF_t)),\label{eqn:instantgain2}
\end{align}
where $(a)$ follows from the Fact~\ref{fact:mutualinfo}. Now, we are ready to bound the ratio.
\begin{align}
\bdelta_t^T\balpha_t&\overset{(b)}{=}\sum_{a\in\cK}\balpha_t(a)\left(\mathbb{E}[Y_{t,a}|\cF_t,A^*=a]-\mathbb{E}[Y_{t,a}|\mathcal{F}_t]\right)\\
&\overset{(c)}{\leq}\sum_{a\in\cK}\balpha_t(a)\sqrt{\frac{1}{2}D\left(\mathbb{P}(Y_{t,a}\in\cdot|\cF_t,A^*=a)||\mathbb{P}(Y_{t,a}\in\cdot|\mathcal{F}_t)\right)}\\
&\overset{(d)}{\leq}\sqrt{\frac{1}{2}\sum_{a\in\cK}\balpha_t(a)D\left(\mathbb{P}(Y_{t,a}\in\cdot|\cF_t,A^*=a)||\mathbb{P}(Y_{t,a}\in\cdot|\mathcal{F}_t)\right)}\\
&\overset{(e)}{\leq}\sqrt{\frac{1}{2}\sum_{a\in\cK}\sum_{a^*\in\cK}\balpha_t(a^*)D(\mathbb{P}(Y_{t,a}\in\cdot|\cF_t,A^*=a^*)||\mathbb{P}(Y_{t,a}\in\cdot|\cF_t))}\\
&\overset{(f)}{=}\sqrt{\frac{1}{2}\bh_t^T\boldsymbol{1}},
\end{align}
where $(b)$ follows from (\ref{eqn:instantregret}), $(c)$ follows from Pinsker's inequality, $(d)$ follows from Jensen's inequality, $(e)$ follows from adding some nonnegative terms and $(f)$ follows from (\ref{eqn:instantgain2}).
\end{proof}

\begin{lemma}\label{lem:addindpt}
For any random variable $A$ and integer $n$, if $X_1,\ldots,X_n$ are mutually independent, then the mutual information
\begin{equation}
I(A;X_1,\ldots,X_n)\geq\sum_{i=1}^nI(A;X_i).
\end{equation}
\end{lemma}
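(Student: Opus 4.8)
The plan is to reduce the statement to a per-coordinate inequality via the chain rule for mutual information and then close the gap using the independence hypothesis. Write $X_{<i}\triangleq(X_1,\dots,X_{i-1})$, with $X_{<1}$ the empty collection. The chain rule gives
\[
I(A;X_1,\dots,X_n)=\sum_{i=1}^n I(A;X_i\mid X_{<i}),
\]
so it suffices to establish $I(A;X_i\mid X_{<i})\ge I(A;X_i)$ for every $i$; summing these inequalities then yields the lemma.

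For the per-term bound I would expand the mutual information $I(X_i;A,X_{<i})$ by the chain rule in the two available orders,
\[
I(X_i;A,X_{<i})=I(X_i;X_{<i})+I(X_i;A\mid X_{<i})=I(X_i;A)+I(X_i;X_{<i}\mid A).
\]
Since $X_1,\dots,X_n$ are mutually independent, $X_i$ is independent of $X_{<i}$, hence $I(X_i;X_{<i})=0$. Equating the two decompositions and solving for the conditional term gives
\[
I(A;X_i\mid X_{<i})=I(A;X_i)+I(X_i;X_{<i}\mid A)\;\ge\;I(A;X_i),
\]
the inequality being nothing more than nonnegativity of the conditional mutual information $I(X_i;X_{<i}\mid A)\ge 0$.

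An equivalent entropy-based route, valid when the relevant entropies are finite, is to write $I(A;X_i\mid X_{<i})=H(X_i\mid X_{<i})-H(X_i\mid A,X_{<i})$, use $H(X_i\mid X_{<i})=H(X_i)$ by independence, and use $H(X_i\mid A,X_{<i})\le H(X_i\mid A)$ since conditioning cannot increase entropy; together these give $I(A;X_i\mid X_{<i})\ge H(X_i)-H(X_i\mid A)=I(A;X_i)$. I do not expect a genuine obstacle here — the result is a textbook consequence of the chain rule — and the only point deserving care is the direction in which independence enters: it is the mutual independence of $X_1,\dots,X_n$ \emph{among themselves} (not any joint independence with $A$) that forces $I(X_i;X_{<i})=0$. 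If one wants the statement for arbitrary (not necessarily discrete) $X_i$, I would simply invoke the chain rule and nonnegativity of conditional mutual information in the general form consistent with the KL-divergence definition used in Fact~\ref{fact:mutualinfo}, so that the argument of the second paragraph applies verbatim.
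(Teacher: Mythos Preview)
Your proof is correct and rests on the same mechanism as the paper's: both arguments exploit the identity
\[
I(A;X_i\mid X_{<i})-I(A;X_i)=I(X_i;X_{<i}\mid A)-I(X_i;X_{<i}),
\]
then use mutual independence of the $X_j$'s to force $I(X_i;X_{<i})=0$ and nonnegativity of conditional mutual information to conclude. The only structural difference is packaging: the paper proves the case $n=2$ by an entropy expansion (Fact~\ref{fact:entropy}) and then iterates $n-1$ times, whereas you apply the chain rule for general $n$ upfront and handle each summand separately. Your route is marginally cleaner in that it avoids the grouping needed for the iteration step, but the two arguments are equivalent in substance.
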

\begin{proof}
First, we consider the case where $n=2$. By Fact \ref{fact:entropy}, we have that
\begin{align}
&I(A;X_1)+I(A;X_2)-I(A;X_1,X_2)\\
=&H(X_1)-H(X_1|A)+H(X_2)-H(X_2|A)-\left(H(X_1,X_2)-H(X_1,X_2|A)\right)\\
=&H(X_1)+H(X_2)-H(X_1,X_2)-(H(X_1|A)+H(X_2|A)-H(X_1,X_2|A))\\
=&I(X_1;X_2)-I(X_1;X_2|A)\\
\overset{(a)}{=}&-I(X_1;X_2|A)\\
\overset{(b)}{\leq}& 0,
\end{align}
where $(a)$ follows from $I(X_1;X_2)=0$ since $X_1$ and $X_2$ are independent and $(b)$ uses that mutual information is nonnegative. The result follows from iteratively applying the above result for $n-1$ times.
\end{proof}

\subsection{Proof of Proposition \ref{prop:infogain}}\label{proof:infogain}
\begin{proof}
Note that $\bg_t\geq\bG_t\bh_t$ is equivalent to $\bg_t(i)\geq\sum_{a\in\cK}\bG_t(i,a)\bh_t(a)$, $\forall i\in\cK$. Now, fix any $i\in\cK$. Recall that $G_t=(\cK,\cE_t)$ is the (deterministic or random) feedback graph. Let $\cN_t(i)=\{a\in\cK|(i,a)\in\cE_t\}$ be the set of actions that can be observed when playing action $i$. The intuition behind the proof is that the entropy reduction by playing action $i$ is equivalent to the mutual information between $A^*$ and all observations $(Y_{t,a})_{a\in\cN_t(i)}$. Formally, by the definition of $\bg_t(i)$ we have that
\begin{align}
\bg_t(i)&=\mathbb{E}[H(\balpha_t)-H(\balpha_{t+1})|\mathcal{F}_t,A_t=i]\\
&\overset{(a)}{=}I_t(A^*;(Y_{t,a})_{a\in\cN_t(i)})\\
&\overset{(b)}{\geq}\sum_{a\in\cN_t(i)}I_t(A^*;Y_{t,a})\\
&\overset{(c)}{=}\sum_{a\in\cN_t(i)}\bh_t(a),\label{eqn:infogain}
\end{align}
where $(a)$ follows from Fact \ref{fact:entropy}, $(b)$ follows from Lemma \ref{lem:addindpt} and that all actions are mutually independent and $(c)$ follows from the definition of $\bh_t(a)$.

Under the deterministic graph, we have that $\sum_{a\in\cN_t(i)}\bh_t(a)=\sum_{a\in\cK}\bG_t(i,a)\bh_t(a)$ by the definition of $\cN_t(i)$. By (\ref{eqn:infogain}) we have $\bg_t(i)\geq\sum_{a\in\cK}\bG_t(i,a)\bh_t(a)$ under the deterministic graph.

Under the random graph, (\ref{eqn:infogain}) holds given the graph $G_t$. By the law of total expectation and (\ref{eqn:infogain}), we have that $\bg_t(i)\geq\sum_{a\in\cK}\mathbb{P}\left((i,a)\in\cE_t\right)\bh_t(a)=\sum_{a\in\cK}\bG_t(i,a)\bh_t(a)$.

In sum, we show that $\bg_t(i)\geq\sum_{a\in\cK}\bG_t(i,a)\bh_t(a)$ for any $i\in\cK$ under any type of graph. The result follows.
\end{proof}

\subsection{Proof of Proposition \ref{prop:ratiobound}}\label{proof:ratiobound}
\begin{proof}
First, we bound the information ratio $\Psi_t(\bpi_t^{\text{TS-N}})$ by Proposition \ref{prop:infogain}
\begin{equation}
\Psi_t(\bpi_t^{\text{TS-N}})=\frac{\left(\bdelta_t^T\bpi_t^{\text{TS-N}}\right)^2}{\bg_t^T\bpi_t^{\text{TS-N}}}\leq\frac{\left(\bdelta_t^T\bpi_t^{\text{TS-N}}\right)^2}{(\bG_t\bh_t)^T\bpi_t^{\text{TS-N}}}=\frac{\left(\bdelta_t^T\balpha_t\right)^2}{(\bG_t\bh_t)^T\balpha_t}=\psi_t
\end{equation}

Second, we bound the information ratio $\Psi_t(\bpi_t^{\text{IDS-N}})$ by Proposition \ref{prop:infogain}
\begin{equation}
\Psi_t(\bpi_t^{\text{IDS-N}})=\frac{\left(\bdelta_t^T\bpi_t^{\text{IDS-N}}\right)^2}{\bg_t^T\bpi_t^{\text{IDS-N}}}\leq\frac{\left(\bdelta_t^T\bpi_t^{\text{IDS-N}}\right)^2}{(\bG_t\bh_t)^T\bpi_t^{\text{IDS-N}}}\overset{(a)}{\leq}\frac{\left(\bdelta_t^T\balpha_t\right)^2}{(\bG_t\bh_t)^T\balpha_t}=\psi_t,
\end{equation}
where $(a)$ uses that $\balpha_t$ is feasible for the problem $P_1$.

Now, we are ready to bound the information ratio $\Psi_t(\bpi_t^{\text{IDSN-LP}})$. By the definition and Proposition \ref{prop:infogain}, we have that
\begin{align}
\Psi_t(\bpi_t^{\text{IDSN-LP}})\leq\frac{\left(\bdelta_t^T\bpi_t^{\text{IDSN-LP}}\right)^2}{(\bG_t\bh_t)^T\bpi_t^{\text{IDSN-LP}}}\overset{(b)}{\leq}\frac{\left(\bdelta_t^T\bpi_t^{\text{IDSN-LP}}\right)^2}{(\bG_t\bh_t)^T\balpha_t}
\overset{(c)}{\leq}\frac{\left(\bdelta_t^T\balpha_t\right)^2}{(\bG_t\bh_t)^T\balpha_t}=\psi_t,
\end{align}
where $(b)$ follows from the constraint of the problem $P_2$ and $(c)$ uses that $\balpha_t$ is feasible for the problem $P_2$.
\end{proof}

\subsection{Proof of Theorem \ref{thm:IDSLP}}\label{proof:IDSLP}
\begin{proof}
By the general bound result in Lemma \ref{prop:generalbound}, what remains is to bound the information ratio $\Psi_t(\bpi_t^{\text{IDS-LP}})$. By the definition and Proposition \ref{prop:infogain}, we have that
\begin{align}
\Psi_t(\bpi_t^{\text{IDS-LP}})&=\frac{\left(\bdelta_t^T\bpi_t^{\text{IDS-LP}}\right)^2}{\bg_t^T\bpi_t^{\text{IDS-LP}}}\leq\frac{\left(\bdelta_t^T\bpi_t^{\text{IDS-LP}}\right)^2}{(\bG_t\bh_t)^T\bpi_t^{\text{IDS-LP}}} \overset{(a)}{\leq} \frac{\left(\bdelta_t^T\bpi_t^{\text{IDS-LP}}\right)^2}{\bh_t^T\balpha_t} \overset{(b)}{\leq} \frac{\left(\bdelta_t^T\balpha_t\right)^2}{\bh_t^T\balpha_t} \overset{(c)}{\leq} \frac{K}{2},
\end{align}
where $(a)$ follows from the constraint of problem $P_3$, $(b)$ uses the fact that $\balpha_t$ is feasible for problem $P_3$ and $(c)$ follows from Lemma \ref{lem:bandit}. Note that the result holds almost surely for any $t$. The regret result follows.
\end{proof}

\subsection{Proof of Theorem \ref{thm:TSNdeterm}}\label{proof:TSNdeterm}
A similar result has been shown in~\cite{tossou2017thompson}, where they extend the concept of an action to an equivalence class (i.e., clique). However, the expectation, KL divergence and mutual information after the extension are lack of careful definition, which makes the proof hard to follow. Here, we provide an alternative proof with details.  
\begin{proof}
By the general bound result in Lemma \ref{prop:generalbound}, it remains to bound the information ratio for each algorithm. By Proposition \ref{prop:ratiobound}, we can simply bound the ratio $\psi_t$ and obtain a unified result. Fix a smallest clique cover, $\mathcal{C}_t$, of the graph $G_t$ such that $|\mathcal{C}_t|=\chi(G_t)$. Let $\bc$ be a clique element of $\mathcal{C}_t$. As shorthand, we let $\balpha_t(\bc)=\sum_{a\in\bc}\balpha_t(a)$. We have that
\begin{align}
(\bG_t\bh_t)^T\balpha_t&=\sum_{a\in\cK}\balpha_t(a)\sum_{a'\in\cK}\bG_t(a,a')I_t(A^*;Y_{t,a'})\\
&\overset{(a)}{=}\sum_{\bc\in\mathcal{C}_t}\sum_{a\in\bc}\balpha_t(a)\sum_{a'\in\cK}\bG_t(a,a')I_t(A^*;Y_{t,a'})\\
&\overset{(b)}{\geq}\sum_{\bc\in\mathcal{C}_t}\sum_{a\in\bc}\balpha_t(a)\sum_{a'\in\bc}I_t(A^*;Y_{t,a'})\\
&=\sum_{\bc\in\mathcal{C}_t}\balpha_t(\bc)\sum_{a\in\bc}I_t(A^*;Y_{t,a})\\
&\overset{(c)}{=}\sum_{\bc\in\mathcal{C}_t}\balpha_t(\bc)\sum_{a\in\bc}\sum_{a^*\in\cK}\balpha_t(a^*)D(\mathbb{P}(Y_{t,a}\in\cdot|\cF_t,A^*=a^*)||\mathbb{P}(Y_{t,a}\in\cdot|\cF_t)),\label{eqn:instantgain3}
\end{align}
where $(a)$ uses that $\mathcal{C}_t$ is a clique cover, $(b)$ follows from the fact that mutual information are nonnegative and $(c)$ follows from the Fact~\ref{fact:mutualinfo}. After bounding the expected information gain, we show an useful result that will be used in bounding the expected instantaneous regret. 
\begin{align}
&\sum_{a\in\bc}\frac{\balpha_t(a)}{\balpha_t(\bc)}\left(\mathbb{E}[Y_{t,a}|\cF_t,A^*=a]-\mathbb{E}[Y_{t,a}|\mathcal{F}_t]\right)\\
\overset{(d)}{\leq}&\sum_{a\in\bc}\frac{\balpha_t(a)}{\balpha_t(\bc)}\sqrt{\frac{1}{2}D\left(\mathbb{P}(Y_{t,a}\in\cdot|\cF_t,A^*=a)||\mathbb{P}(Y_{t,a}\in\cdot|\mathcal{F}_t)\right)}\\
\overset{(e)}{\leq}&\sqrt{\frac{1}{2}\sum_{a\in\bc}\frac{\balpha_t(a)}{\balpha_t(\bc)}D\left(\mathbb{P}(Y_{t,a}\in\cdot|\cF_t,A^*=a)||\mathbb{P}(Y_{t,a}\in\cdot|\mathcal{F}_t)\right)},\label{eqn:generalpinsker}
\end{align}
where $(d)$ follows from Pinsker's inequality and $(e)$ follows from Jensen's inequality. Then we have that
\begin{align}
\bdelta_t^T\balpha_t&\overset{(f)}{=}\sum_{a\in\cK}\balpha_t(a)\left(\mathbb{E}[Y_{t,a}|\cF_t,A^*=a]-\mathbb{E}[Y_{t,a}|\mathcal{F}_t]\right)\\
&\overset{(g)}{=}\sum_{\bc\in\mathcal{C}_t}\balpha_t(\bc)\sum_{a\in\bc}\frac{\balpha_t(a)}{\balpha_t(\bc)}\left(\mathbb{E}[Y_{t,a}|\cF_t,A^*=a]-\mathbb{E}[Y_{t,a}|\mathcal{F}_t]\right)\\
&\overset{(h)}{\leq}\sqrt{\left(\sum_{\bc\in\mathcal{C}_t}1^2\right)\left(\sum_{\bc\in\mathcal{C}_t}\balpha_t(\bc)^2\left(\sum_{a\in\bc}\frac{\balpha_t(a)}{\balpha_t(\bc)}\left(\mathbb{E}[Y_{t,a}|\cF_t,A^*=a]-\mathbb{E}[Y_{t,a}|\mathcal{F}_t]\right)\right)^2\right)}\\
&\overset{(i)}{\leq}\sqrt{\frac{|\mathcal{C}_t|}{2}\sum_{\bc\in\mathcal{C}_t}\balpha_t(\bc)^2\sum_{a\in\bc}\frac{\balpha_t(a)}{\balpha_t(\bc)}D\left(\mathbb{P}(Y_{t,a}\in\cdot|\cF_t,A^*=a)||\mathbb{P}(Y_{t,a}\in\cdot|\mathcal{F}_t)\right)}\\
&=\sqrt{\frac{|\mathcal{C}_t|}{2}\sum_{\bc\in\mathcal{C}_t}\balpha_t(\bc)\sum_{a\in\bc}\balpha_t(a)D\left(\mathbb{P}(Y_{t,a}\in\cdot|\cF_t,A^*=a)||\mathbb{P}(Y_{t,a}\in\cdot|\mathcal{F}_t)\right)}\\
&\overset{(j)}{\leq}\sqrt{\frac{|\mathcal{C}_t|}{2}\sum_{\bc\in\mathcal{C}_t}\balpha_t(\bc)\sum_{a\in\bc}\sum_{a^*\in\cK}\balpha_t(a^*)D\left(\mathbb{P}(Y_{t,a}\in\cdot|\cF_t,A^*=a^*)||\mathbb{P}(Y_{t,a}\in\cdot|\mathcal{F}_t)\right)}\\
&\overset{(k)}{\leq}\sqrt{\frac{|\mathcal{C}_t|}{2}(\bG_t\bh_t)^T\balpha_t},
\end{align}
where $(f)$ follows from (\ref{eqn:instantregret}), $(g)$ uses that $\mathcal{C}_t$ is a clique cover, $(h)$ follows from Cauchy-Schwartz inequality, $(i)$ follows from (\ref{eqn:generalpinsker}), $(j)$ follows by adding some nonnegative terms and $(k)$ follows from (\ref{eqn:instantgain3}). Then we bound the ratio by
\begin{equation}
\psi_t=\frac{\left(\bdelta_t^T\balpha_t\right)^2}{(\bG_t\bh_t)^T\balpha_t}\leq\frac{|\mathcal{C}_t|}{2}=\frac{\chi(\bG_t)}{2}.
\end{equation}

Hence, the result follows from Lemma~\ref{prop:generalbound} and Proposition~\ref{prop:ratiobound}.
\end{proof}

\subsection{Proof of Theorem \ref{thm:TSNrandom}}\label{proof:TSNrandom}
\begin{proof}
By the general bound result in Lemma \ref{prop:generalbound}, what remains is to bound the information ratio for each algorithm. By Proposition \ref{prop:ratiobound}, we can simply bound the ratio $\psi_t$ and obtain a unified result. Recall that $\bG_t(i,i)=1$ for any $i\in\cK$ and $\bG_t(i,j)=r_t$ for any $i\neq j$ under the Erd{\H{o}}s-R{\'e}nyi random graph. Then we have that
\begin{equation}\label{eqn:galpha}
\bG_t^T\balpha_t=(1-r_t)\balpha_t+r_t\boldsymbol{1}.
\end{equation}

Then, we have that 
\begin{align}
\psi_t&=\frac{\left(\bdelta_t^T\balpha_t\right)^2}{(\bG_t\bh_t)^T\balpha_t}\overset{(a)}{=}\frac{\left(\bdelta_t^T\balpha_t\right)^2}{\bh_t^T((1-r_t)\balpha_t+r_t\boldsymbol{1})}\\
&\overset{(b)}{\leq}\frac{\left(\bdelta_t^T\balpha_t\right)^2}{(1-r_t)\frac{2}{K}\left(\bdelta_t^T\balpha_t\right)^2+2r_t\left(\bdelta_t^T\balpha_t\right)^2}=\frac{K}{2(Kr_t+1-r_t)},
\end{align}
where $(a)$ follows from (\ref{eqn:galpha}) and $(b)$ follows from Lemma \ref{lem:bandit} and \ref{lem:fullinfo}.

Hence, the result follows from Lemma~\ref{prop:generalbound} and Proposition~\ref{prop:ratiobound}.
\end{proof}
\subsection{Proof of Proposition \ref{prop:complexityLP}}\label{proof:complexityLP}
\begin{proof}
Note that $\balpha_t$ is a feasible solution for both $P_2$ and $P_3$. The objective function $\bpi_t^T\bdelta_t$ is bounded since each coordinate is bounded by 1. So the optimal value is attained on the vertices of the polyhedra defined by the constraints. Note that the constraints consist of probability simplex and a closed half-space. So the polyhedra has at most $K+1$ vertices. Then the linear program can be solved by visiting all the vertices in $O(K)$ iterations.
\end{proof}

\section{Graph Structure for the Numerical Experiment}\label{com:graph}
Figure \ref{fig:determinvargraph} represents the graph structure we use for the time-invariant and deterministic case. It is clear that the clique cover number is 2 and the independence number is 2 and the domination number is 1. By Theorem \ref{thm:TSNdeterm}, the regrets of TS-N, IDS-N and IDSN-LP scale with 2 instead of 5. Note that for the $\epsilon_t$-greed-LP algorithm of \cite{sigmetrics2014}, it will play action $3$ for exploration since the dominating set is $\{3\}$.
\begin{figure}[hb]
  \centering
    \includegraphics[width=0.3\textwidth]{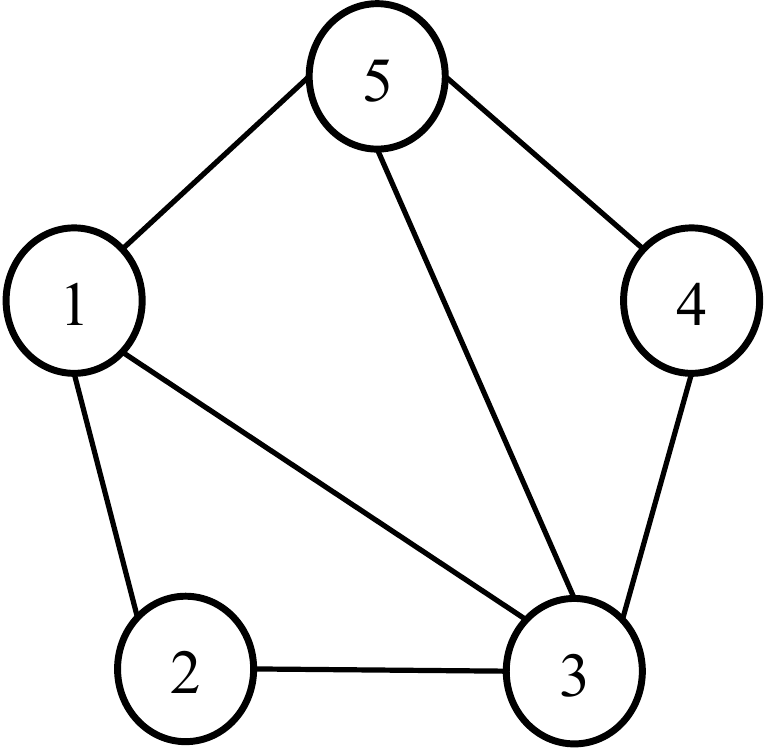}
     \caption{Graph structure for the experiment under time-invariant and deterministic graph}
     \label{fig:determinvargraph}
\end{figure}
\end{document}